\documentclass[11pt]{article}

\usepackage[utf8]{inputenc}
\usepackage[T1]{fontenc}
\usepackage{amsmath,amsfonts,amssymb,amsthm,mathtools}
\usepackage[authoryear,round]{natbib}
\usepackage{graphicx}
\usepackage{xcolor}
\usepackage{fullpage}
\usepackage{microtype}
\usepackage{csquotes}
\usepackage[osf,sc]{mathpazo}
\usepackage[scaled=0.90]{helvet}
\usepackage[scaled=0.85]{beramono}
\usepackage{textcomp}
\usepackage[small]{caption}
\usepackage[skip=2mm,indent=5mm]{parskip}
\usepackage{tikz}
\usetikzlibrary{shapes,arrows}

\definecolor{DarkGreen}{rgb}{0.075,0.375,0.075}
\definecolor{DarkRed}{rgb}{0.5,0.1,0.1}
\definecolor{DarkBlue}{rgb}{0.1,0.1,0.5}
\definecolor{Gray}{rgb}{0.2,0.2,0.2}

\usepackage{hyperref}
\hypersetup{
unicode=true,
pdftoolbar=true,
pdfmenubar=true,
pdffitwindow=false,
pdfnewwindow=true,
colorlinks=true,
linkcolor=DarkBlue,
citecolor=DarkGreen,
filecolor=DarkRed,
urlcolor=DarkBlue,
pdftitle={Retraining Seeks Stable Signals},
pdfauthor={Moritz Hardt},
}
\usepackage{cleveref}
\crefname{theorem}{Theorem}{Theorems}
\Crefname{theorem}{Theorem}{Theorems}
\crefname{lemma}{Lemma}{Lemmas}
\Crefname{lemma}{Lemma}{Lemmas}
\crefname{proposition}{Proposition}{Propositions}
\Crefname{proposition}{Proposition}{Propositions}
\crefname{corollary}{Corollary}{Corollaries}
\Crefname{corollary}{Corollary}{Corollaries}
\crefname{assumption}{Assumption}{Assumptions}
\Crefname{assumption}{Assumption}{Assumptions}

\theoremstyle{plain}
\newtheorem{theorem}{Theorem}

\newtheorem{proposition}[theorem]{Proposition}
\newtheorem{corollary}[theorem]{Corollary}
\newtheorem{claim}[theorem]{Claim}

\theoremstyle{definition}
\newtheorem{definition}[theorem]{Definition}

\newtheorem*{example}{Example}

\theoremstyle{remark}
\newtheorem{remark}[theorem]{Remark}

\newcommand{\R}{\mathbb{R}}
\newcommand{\cH}{\mathcal{H}}
\newcommand{\cX}{\mathcal{X}}

\newcommand{\cF}{\mathcal{F}}

\DeclareMathOperator*{\argmin}{arg\,min}

\DeclareMathOperator*{\E}{\mathbb{E}}

\DeclareMathOperator{\logit}{logit}
\DeclareMathOperator{\softmax}{softmax}

\title{Retraining Seeks Stable Signals}
\author{Moritz Hardt\footnote{Max Planck Institute for Intelligent Systems, T\"ubingen, and T\"ubingen AI Center. Email: \url{sf-admin@is.mpg.de}}}

\begin{document}
\maketitle

\begin{abstract}
Predictive models deployed at scale influence future data, a phenomenon called performativity. And there is always one way to cope: Train the model on new data, deploy it again, and repeat. This process, called retraining or repeated risk minimization, creates a feedback loop between model and data that real-world learning systems can’t avoid.
Results on performative prediction shed light on this dynamic: If the model's influence on the data is small, retraining reaches a fixed point. What remains open is why fixed points should naturally exist, and what governs retraining when the model's influence is strong. In this work we develop a new perspective on retraining---the stable signal principle---that addresses these questions. We start from the assumption that the prediction target has at least some small model-independent component, a stable signal, such as the intrinsic quality of an item.
We prove that when a nonzero stable signal exists, repeated risk minimization, suitably regularized, converges geometrically to the direction of this stable signal. This is true even if the model's influence on the target is arbitrarily large relative to the stable signal. Regularization emerges naturally as a force to control performativity, rather than to promote generalization, revealing a new facet of an old concept. We extend the analysis to a broad family of affine retraining operators under arbitrary model-induced feature changes, heterogeneous time-varying effects, and nonlinear responses. The stable signal perspective also applies to data feedback loops in language modeling, providing new explanations for the stability of language model training from model-generated data.

\end{abstract}

\section{Introduction}

Learning systems rarely operate in stationary environments for a fundamental reason: The model moves the data. This phenomenon, known as performativity, arises with all predictions that people react to: election polls, epidemiological forecasts, credit scores, online recommendations, digital ads, traffic estimates, chatbots, and AI assistants. In each case, people respond to a prediction in a way that may change the outcome, thus possibly reinforcing or invalidating the prediction. Online recommendations, for example, can be self-fulfilling prophecies, as users click on what the model ranks highly. Traffic predictions, on the other hand, can be self-negating: drivers jam up the recommended route. The strength and direction of performativity vary from one case to the other.

The way to cope in all applications is \emph{retraining}: periodically train the model on newly generated data, deploy it again, repeat indefinitely. Retraining creates a closed loop between model and data that drives learning systems into unknown behavior. Powerful results in performative prediction show that retraining on smooth, strongly convex losses converges to fixed points when the model's influence is small \citep{perdomo2020performative,mendler2020stochastic,drusvyatskiy2023stochastic}. But what happens when the model's influence is not small? In real settings, model influence isn't an approximation error; it's a first-order term \citep{mendler2024engine}. Some worry that the feedback loop will amplify the model’s behavior, causing harm across many areas of social life \citep{ensign2018runaway, jiang2019degenerate, ward2022loop, taori2023data}. Others hypothesize that the model quality degrades sharply and eventually collapses entirely \citep{shumailov2024collapse, alemohammad2024mad}. 

We show that a different principle is at play: Retraining seeks \emph{stable signals}---model-independent factors in outcomes---even if weak compared to the model’s own influence. Specifically, we prove several convergence results for retraining that all follow from the same organizing principle: The existence of a small stable signal overcomes strong performative effects under repeated risk minimization, when suitably normalized or regularized.

\begin{itemize}
\item To start, we illustrate the stable signal principle in a population least-squares model. Despite arbitrary model-dependent changes in the feature distribution, retraining follows an explicit affine recursion in prediction space. Suitable $\ell_2$-regularization yields geometric convergence to the stable-signal direction, while two consecutive updates identify the feedback strength and recover the performatively stable point of the unregularized problem. We analyze heterogeneous and time-varying performative effects, deriving explicit population recursions, convergence conditions, and limits on what can be identified from a retraining trajectory.

\item The phenomenon is not specific to squared loss: Whenever the loss-optimal conditional prediction is affine in the deployed predictor, population retraining inherits the same affine map. This generalization covers logistic responses and softmax models, as well as more general nonlinear response links.

\item As an alternative to $\ell_2$-regularization, we show that normalization provides a second way to control strong performative feedback. Using ideas from the convergence analysis of the power method, we prove that normalized retraining recovers the stable-signal direction under arbitrarily strong positive isotropic feedback. The angular convergence analysis extends to operator-valued and time-varying effects.

\item We apply the stable signal perspective to language-model data feedback loops. For repeated training on mixtures of real and model-generated data, we characterize when the influence of initialization disappears and how retaining past generated data changes the rate of convergence.

\item Finally, we give a parametric version of the affine theory and stochastic-gradient convergence guarantees for squared loss in the standard parametric formulation of performative prediction.
\end{itemize}

\subsection{Related work}
\label{sec:related-work}

Performative prediction studies learning problems in which model deployment changes the data distribution \citep{perdomo2020performative}. A predictor may be performatively stable, as a fixed point of retraining, or performatively optimal, as a minimizer of its own deployed risk. Related ideas go back almost a century to work on economic forecasting \citep{morgenstern1928wirtschaftsprognose,merton1948self,grunberg1954predictability,simon1954bandwagon}. More recently, special cases of performative prediction appeared in computer science in areas such as strategic classification \citep{bruckner2011stackelberg,hardt2016strategic}, algorithmic fairness \citep{ensign2018runaway,hashimoto2018fairness,liu2018delayed}, and concept/distribution shift \citep{gama2014survey}. The standard formulation of a distribution map allows deployment to change the feature distribution as well as the conditional outcome variable. The restriction to fixed feature distributions is outcome performativity \citep{kim2023making,perdomo2025revisiting}. See \citet{hardt2025performative} and \citet{kehrenberg2026dissecting} for surveys.

Standard convergence results require the distributional sensitivity to model parameters to be small relative to loss curvature \citep{perdomo2020performative,mofakhami2023performative}. Extensions cover stochastic algorithms, gradual response, games, and mixed sources of shift \citep{mendler2020stochastic,drusvyatskiy2023stochastic,brown2022performative,izzo2022how,li2022state,ray2022decision,narang2023multiplayer,lee2026partially,farina2026stability}. 
Work on performative optimality uses derivative-free, bandit, and plug-in methods \citep{izzo2021how,miller2021outside,jagadeesan2022regret,lin2024plugin}. Calibration gives another route to analyze outcome performativity~\citep{perdomo2025revisiting}.

Data feedback loops arise in predictive policing, recommendation, and generative modeling \citep{ensign2018runaway,jiang2019degenerate,taori2023data}. Repeated training on generated text can lose distributional support or otherwise degenerate \citep{shumailov2024collapse,alemohammad2024mad,dohmatob2024tale}, prompting concerns about ``model collapse''. Fresh or accumulated real data, however, can stabilize these loops \citep{bertrand2024stability,gerstgrasser2024model}. Our results on language model retraining echo these stability results and relate them to a broader phenomenon.

\section{Preliminaries}

Let $D_0$ be a fixed reference distribution on a bounded feature domain $\cX$. We use it to compare prediction functions and work in the Hilbert space $\cH=L^2(D_0)$, with inner product
\[
\langle f,g\rangle = \E_{D_0}[f(X)g(X)]
\]
and norm $\|f\|=\sqrt{\langle f,f\rangle}$. Unless otherwise specified, every norm of a prediction function refers to this norm. Equalities between functions hold $D_0$-almost everywhere.

\paragraph{Distribution map.}
Given a deployed predictor $h\in\cH$, let $D(h)$ denote the induced joint distribution of $(X,Y)$ and $D_X(h)$ its feature marginal. We call $D(\cdot)$ the distribution map. We place no smoothness or parametric restriction on the marginal map $h\mapsto D_X(h)$: deployment may reweight the feature population arbitrarily. We assume only that $D_X(h)$ and the reference distribution $D_0$ share the same null sets (the same support in finite spaces). 

\paragraph{Repeated risk minimization.}
Let $\ell$ be a loss function. Repeated risk minimization, also called retraining, is the update rule
\begin{equation*}
f_{t+1}\in \argmin_{f\in\cH}\; \E_{D(h_t)}\ell(f(X),Y)\,,
\end{equation*}
where $h_t$ is the predictor deployed at round $t$. In \Cref{sec:least-squares}, the latest iterate is deployed directly, $h_t=f_t$; in \Cref{sec:deployment-normalization}, the deployed predictor is the normalized iterate.

\begin{definition}
A deployed predictor $h$ is performatively stable for the loss $\ell$ if it is a fixed point of population retraining:
\[
h\in \argmin_{f\in\cH}\;\E_{D(h)}\ell(f(X),Y)\,.
\]
\end{definition}

\section{Repeated least squares converges to stable signals}
\label{sec:least-squares}

Assume that deployment of $h$ induces the conditional mean
\begin{equation*}
\E_{D(h)}[Y\mid X]=\alpha f^*(X)+\beta h(X)\,,
\end{equation*}
where $f^*$ is a stable signal, $\alpha>0$ is the stable-signal strength and $\beta\in\R$ is the performative feedback coefficient. Independently of this conditional-mean model, deployment may change the feature marginal $D_X(h)$ in any way that preserves support. We first take $\alpha$ and $\beta$ to be constant. \Cref{subsec:heterogeneous-effects} allows for heterogeneous performative effects.

\begin{example}[Watch-time prediction]
The outcome variable $Y\in\R$ represents the time a visitor spends watching a video~$X\in\R^d$. The stable signal $f^*(X)$ might represent a latent measure of \emph{video quality}. The prediction $f(X)\in\R$ represents the platform's predicted watch time. It influences the actual watch time, since the platform will display videos with high predicted watch time more prominently to the visitor. 
\end{example}

Our goal is to analyze repeated squared loss minimization in the above setting. Starting from a fixed predictor $f_0$, we repeatedly solve the regularized least squares objective:
\begin{equation*}
 f_{t+1}
=\argmin_{f\in\cH}\; \E_{D(f_t)}\left[(f(X)-Y)^2+\delta f(X)^2\right]\,,
\end{equation*}
for a penalty $\delta\ge0$ that we may choose. 

Although deployment may change the feature marginal arbitrarily, this does not affect the optimal solution to the problem.
\begin{claim}[Update rule]
\label{claim:ls-update-rule}
For every $t\ge0,$
\begin{equation}
\label{eq:ridge-update}
f_{t+1}=\frac{\alpha}{1+\delta}f^*+\frac{\beta}{1+\delta}f_t\,.
\end{equation}
\end{claim}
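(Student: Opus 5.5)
The plan is to reduce the population objective to a pointwise problem by conditioning on $X$, solve that problem in closed form, and then use the shared null sets to pass from $D_X(f_t)$-almost-everywhere to equality in $\cH$. Fix $t$ and write $\mu_t(x)=\E_{D(f_t)}[Y\mid X=x]=\alpha f^*(x)+\beta f_t(x)$. By the tower property, for any $f\in\cH$,
\[
\E_{D(f_t)}\left[(f(X)-Y)^2+\delta f(X)^2\right]
=\E_{X\sim D_X(f_t)}\left[(f(X)-\mu_t(X))^2+\delta f(X)^2\right]+\E_{D(f_t)}\operatorname{Var}(Y\mid X).
\]
The variance term does not depend on $f$, so only the first expectation matters.

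Next I minimize the integrand pointwise. For fixed $x$, the map $c\mapsto (c-\mu_t(x))^2+\delta c^2$ is strictly convex, since $1+\delta>0$, and its unique minimizer is $c=\mu_t(x)/(1+\delta)$. Completing the square gives
\[
(c-\mu)^2+\delta c^2=(1+\delta)\Bigl(c-\tfrac{\mu}{1+\delta}\Bigr)^2+\tfrac{\delta}{1+\delta}\mu^2 .
\]
So the objective equals $(1+\delta)\,\E_{D_X(f_t)}\bigl[(f-\mu_t/(1+\delta))^2\bigr]$ plus a constant. The candidate $g=\mu_t/(1+\delta)=\frac{\alpha}{1+\delta}f^*+\frac{\beta}{1+\delta}f_t$ lies in $\cH$ because $f^*,f_t\in\cH$, and it attains the minimum.

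For uniqueness, suppose $f$ is another minimizer. Then $f=g$ holds $D_X(f_t)$-almost everywhere. Since $D_X(f_t)$ and $D_0$ share null sets, it follows that $f=g$ $D_0$-almost everywhere, which is equality in $\cH$. This gives \eqref{eq:ridge-update}, and the argmin is a singleton, so the update is well defined.

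The main obstacle is measure-theoretic bookkeeping. The objective is an expectation under $D_X(f_t)$, but $\cH$ is built on $D_0$, so I need finiteness of the objective for the relevant $f$. My first attempt would be to note that $\cX$ is bounded and to assume $Y$ has a finite second moment, so the objective is finite at $g$. The shared-null-sets assumption then does the rest: it transfers almost-everywhere uniqueness from $D_X(f_t)$ to $D_0$. This is exactly why an arbitrary feature shift has no effect on the update.
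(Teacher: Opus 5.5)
Your proof is correct and follows the paper's argument. Both condition on $X$, minimize $(a-\E[Y\mid X=x])^2+\delta a^2$ pointwise to get $\E[Y\mid X=x]/(1+\delta)$, and use the common null sets of $D_X(f_t)$ and $D_0$ so that the feature shift cannot change the minimizer; the paper phrases this last step via the positive density $w_t=dD_X(f_t)/dD_0$, and your extra checks on uniqueness and membership of $g$ in $\cH$ are sound. One small correction: finiteness of the objective at $g$ comes from $\E_{D(f_t)}[Y^2]<\infty$ (through $\E\mu_t^2\le\E Y^2$), not from boundedness of $\cX$, since $f^*$ and $f_t$ need not be bounded.
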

\begin{proof}
Let $w_t=dD_X(f_t)/dD_0$. By common support, $w_t(x)>0$ almost everywhere, and the objective equals
\[
   \int w_t(x)\left(
  \E_{D(f_t)}[(f(x)-Y)^2\mid X=x]+\delta f(x)^2
\right)dD_0(x)\,.
\]
The positive weight $w_t(x)$ changes how often each feature value appears, but not the pointwise minimizer. 
Hence, for every deployment-dependent marginal $D_X(f_t)$ with common support, we can compute the optimal solution pointwise by differentiation. Condition on $X=x$. Up to a term independent of the candidate value $a=f(x)$, the pointwise objective is
\[
\bigl(a-\E_{D(f_t)}[Y\mid X=x]\bigr)^2+\delta a^2\,.
\]
Its minimizer is
\[
a=\frac{\E_{D(f_t)}[Y\mid X=x]}{1+\delta}
  =\frac{\alpha f^*(x)+\beta f_t(x)}{1+\delta}\,.
\]
\end{proof}

Given the explicit update rule, we can characterize the convergence of repeated least squares.

\begin{proposition}[Repeated least-squares dynamics]
\label{prop:least-squares}
Assume $\alpha>0$ and $\delta\ge0$. If $\beta\ne1+\delta$, the update \cref{eq:ridge-update} has the unique fixed point
\[
f_{\delta}=\frac{\alpha}{1+\delta-\beta}f^*\,.
\]
Moreover,
\[
f_t-f_{\delta}
=\left(\frac{\beta}{1+\delta}\right)^t
  \bigl(f_0-f_{\delta}\bigr)\,.
\]
Unless $f_0=f_{\delta}$ already, $f_t\to f_{\delta}$ geometrically in norm if and only if $|\beta|<1+\delta$. For nonnegative performativity $\beta\ge0$, it suffices to choose $\delta>\beta-1$.
\end{proposition}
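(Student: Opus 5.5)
The plan is to treat \cref{eq:ridge-update} from \Cref{claim:ls-update-rule} as a scalar affine recursion in the Hilbert space $\cH$, of the form $f_{t+1}=c f^*+\rho f_t$ with $c=\alpha/(1+\delta)$ and $\rho=\beta/(1+\delta)$. Because the claim has already absorbed the arbitrary feature-marginal shifts, nothing about $D_X(f_t)$ needs to be revisited. The whole argument is linear algebra on this affine map.

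\textbf{Fixed point.} A fixed point $g$ satisfies $(1-\rho)g=c f^*$. When $\beta\ne1+\delta$ we have $\rho\ne1$, so we can divide to get $g=\frac{c}{1-\rho}f^*=\frac{\alpha}{1+\delta-\beta}f^*$, which is $f_\delta$ as stated. Uniqueness follows from the same equation: if $g,g'$ are two fixed points, then $(1-\rho)(g-g')=0$, and hence $g=g'$.

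\textbf{Error recursion.} Subtracting the fixed-point identity $f_\delta=cf^*+\rho f_\delta$ from the update gives $f_{t+1}-f_\delta=\rho(f_t-f_\delta)$. Induction on $t$ then yields the closed form $f_t-f_\delta=\rho^t(f_0-f_\delta)$.

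\textbf{Convergence.} Taking norms gives $\|f_t-f_\delta\|=|\rho|^t\|f_0-f_\delta\|$. If $f_0\ne f_\delta$, this quantity tends to zero, and does so geometrically, exactly when $|\rho|<1$, i.e.\ $|\beta|<1+\delta$. Conversely, if $|\rho|\ge1$ the norm stays bounded below by $\|f_0-f_\delta\|>0$, so there is no convergence. Finally, for $\beta\ge0$ the condition $|\beta|<1+\delta$ reduces to $\beta<1+\delta$, i.e.\ $\delta>\beta-1$. Any such $\delta\ge0$ works, and such a $\delta$ always exists.

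There is no real obstacle here. The only point needing care is the ``only if'' direction, where the hypothesis $f_0\ne f_\delta$ is what rules out the trivial constant sequence. The norm being exactly multiplicative under the scalar $\rho$ makes that direction immediate.
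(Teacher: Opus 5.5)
Your proposal is correct and follows the same route as the paper: solve the fixed-point equation $(1+\delta)f=\alpha f^*+\beta f$, subtract it from the update to get the error recursion with factor $\beta/(1+\delta)$, and iterate. Your explicit norm identity $\|f_t-f_\delta\|=|\rho|^t\|f_0-f_\delta\|$, which gives the ``if and only if'' directly, fills in a step the paper leaves implicit.
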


\begin{proof}
By Claim~\ref{claim:ls-update-rule},
\[
f_{t+1}=\frac{\alpha}{1+\delta}f^*+\frac{\beta}{1+\delta}f_t\,.
\]

The fixed-point equation is
\[
(1+\delta)f=\alpha f^*+\beta f\,,
\]
which gives the stated unique fixed point when $\beta\ne1+\delta$. Subtracting the fixed-point equation from \cref{eq:ridge-update} gives
\[
f_{t+1}-f_{\delta} =\frac{\beta}{1+\delta}\bigl(f_t-f_{\delta}\bigr)\,,
\]
and hence the expression.
\end{proof}

\paragraph{The role of regularization.} In learning theory, regularization typically promotes generalization. Here, it serves a different purpose. By increasing $\delta\ge0,$ we can control arbitrarily large performative effects $\beta>0.$ We only need to set $\delta=2\beta,$ for example, to guarantee geometric convergence of retraining to a fixed point. Regularization makes retraining contractive when it isn't on its own.

\subsection{Two-step decoding of performative stable points}

The fixed point of the regularized objective is not the fixed point of the unregularized least squares objective. The unregularized retraining map is
\[
T_0(f)=\alpha f^*+\beta f\,.
\]
If $\beta\ne1$, its unique performatively stable point is
\begin{equation*}
f_{\mathrm{PS}}=\frac{\alpha}{1-\beta}f^*\,.
\end{equation*}
The regularized fixed point $f_\delta$ is generally different, but in this affine model it contains the same one-dimensional signal. When $\beta$ is known, it can be decoded directly from $f_\delta$ via the identity
\begin{equation*}
f_{\mathrm{PS}}
=\frac{1+\delta-\beta}{1-\beta}f_\delta\,.
\end{equation*}
The point $f_{\mathrm{PS}}$ is performatively stable for the original squared loss. If $\beta=1$ and $\alpha>0$, no stable point exists. 

The previous identity gives us $f_{\mathrm{PS}}$ provided that we know $\beta$. Next we identify $\beta$ itself. 

\paragraph{Two-step decoding.}
The next proposition shows how to identify $\beta$, the strength of performativity, from three consecutive iterates, that is, two risk minimization steps. Once we know $\beta$ we also get the stable signal $\alpha f^*$ and the stable point $f_{\mathrm{PS}}$ for free.

\begin{proposition}[Identification from two steps]
\label{prop:identify-beta}
Suppose the population iterates satisfy \cref{eq:ridge-update}, and observe $f_t,f_{t+1},f_{t+2}$ for some $t$ with $f_{t+1}\ne f_t$. Define
\begin{equation}
\label{eq:identify-effective-beta}
b_t = \frac{\langle f_{t+2}-f_{t+1},\,f_{t+1}-f_t\rangle}
 {\|f_{t+1}-f_t\|^2}\,.
\end{equation}
Then,
\[
\beta=(1+\delta)b_t,\quad\text{and}\quad 
\alpha f^*=(1+\delta)\bigl(f_{t+1}-b_t f_t\bigr)\,.
\]
Consequently, if $\beta\ne1$, the performatively stable point for the original squared loss objective 
is
\begin{equation}
\label{eq:identify-original-stable-point}
f_{\mathrm{PS}}
= \frac{(1+\delta)\bigl(f_{t+1}-b_t f_t\bigr)}{1-(1+\delta)b_t}\,.
\end{equation}
If $\|f^*\|=1$ and $\alpha>0$, then $\alpha=\|\alpha f^*\|$ and $f^*=(\alpha f^*)/\alpha$.
\end{proposition}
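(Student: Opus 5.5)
The plan is to work directly from the affine update \cref{eq:ridge-update} of Claim~\ref{claim:ls-update-rule}. Write $\rho=\beta/(1+\delta)$ and $c=\alpha f^*/(1+\delta)$, so that $f_{s+1}=c+\rho f_s$ for every $s$. Subtracting two consecutive instances, at $s=t+1$ and $s=t$, cancels the constant term $c$:
\[
f_{t+2}-f_{t+1}=\rho\,(f_{t+1}-f_t)\,.
\]
This is the key identity: consecutive increments are exactly collinear, and the ratio is $\rho$.

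Next, take the inner product of this identity with $f_{t+1}-f_t$ in $\cH=L^2(D_0)$. By bilinearity,
\[
\langle f_{t+2}-f_{t+1},f_{t+1}-f_t\rangle=\rho\,\|f_{t+1}-f_t\|^2\,.
\]
The hypothesis $f_{t+1}\ne f_t$ means the difference is nonzero in $L^2(D_0)$, so its norm is positive and we may divide. This gives $b_t=\rho$, hence $\beta=(1+\delta)b_t$. The only real care needed anywhere in the argument is this nondegeneracy condition, and it is assumed.

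With $b_t=\rho$ in hand, rearrange the update at step $t$: $c=f_{t+1}-\rho f_t=f_{t+1}-b_tf_t$. Multiplying by $1+\delta$ yields $\alpha f^*=(1+\delta)(f_{t+1}-b_tf_t)$.

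For \cref{eq:identify-original-stable-point}, substitute both recovered quantities into $f_{\mathrm{PS}}=\alpha f^*/(1-\beta)$. This formula holds for $\beta\ne1$, as in the unregularized fixed-point computation preceding the proposition. The denominator is $1-\beta=1-(1+\delta)b_t$, which gives the displayed formula.

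Finally, if $\|f^*\|=1$ and $\alpha>0$, then $\|\alpha f^*\|=|\alpha|\,\|f^*\|=\alpha$. Dividing $\alpha f^*$ by $\alpha$ then recovers $f^*$.
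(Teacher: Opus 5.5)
Your proposal is correct and follows the paper's proof: subtracting consecutive instances of \cref{eq:ridge-update}, pairing with $f_{t+1}-f_t$ and dividing by its (nonzero) squared norm to get $b_t=\beta/(1+\delta)$, then substituting back to recover $\alpha f^*$ and dividing by $1-\beta$ for $f_{\mathrm{PS}}$. Your explicit check that $\|\alpha f^*\|=\alpha$ when $\|f^*\|=1$ and $\alpha>0$ covers the final sentence, which the paper leaves implicit.
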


\begin{proof}
Subtract \cref{eq:ridge-update} at rounds $t+1$ and $t$ to obtain
\[
f_{t+2}-f_{t+1} =\frac{\beta}{1+\delta} \bigl(f_{t+1}-f_t\bigr)\,.
\]
Because $f_{t+1}\ne f_t$, taking the inner product with $f_{t+1}-f_t$ gives \cref{eq:identify-effective-beta}. Substituting the recovered coefficient into \cref{eq:ridge-update} yields the stable component $\alpha f^*$, and division by $1-\beta$ gives \cref{eq:identify-original-stable-point}.
\end{proof}

Identification does not require convergence. It applies to any two consecutive updates for which the iterate moves. We can therefore cut retraining short after two iterations and directly jump to the stable point. 

\subsection{Heterogeneous performative effects}
\label{subsec:heterogeneous-effects}

The same convergence analysis allows for feature-dependent coefficients:
\[
\E_{D(h)}[Y\mid X=x] =\alpha(x)f^*(x)+\beta(x)h(x)\,.
\]
We can only identify $\alpha(x)f^*(x)$ without an additional normalization, so write
\[
s(x)=\alpha(x)f^*(x)
\]
for the stable component of the conditional mean. Assume $s\in\cH$ and that $\beta$ is measurable and essentially bounded. The same pointwise minimization as above gives
\[
f_{t+1}(x) =\frac{s(x)+\beta(x)f_t(x)}{1+\delta}\,.
\]
Whenever $1+\delta-\beta(x)\neq 0$ almost everywhere, the regularized retraining map has the pointwise fixed point
\[
f_\delta(x) =\frac{s(x)}{1+\delta-\beta(x)}\,.
\]
Moreover,
\begin{equation*}
f_t(x)-f_\delta(x)
=\left(\frac{\beta(x)}{1+\delta}\right)^t
\bigl(f_0(x)-f_\delta(x)\bigr)\,.
\end{equation*}
In particular, the uniform condition $\|\beta\|_\infty<1+\delta$ implies geometric convergence in norm at rate at most $\|\beta\|_\infty/(1+\delta)$.

For the original squared-loss objective, the performatively stable point is
\[
f_{\mathrm{PS}}(x) =\frac{s(x)}{1-\beta(x)}\,,
\]
provided $1-\beta(x)\neq 0$ almost everywhere and the right-hand side belongs to
$\cH$. The stable point is now a pointwise, not global, rescaling of the stable
component. It need not be collinear with $f^*$, and its sign may differ across
feature values.

We can still decode all relevant parameters pointwise. Letting $\Delta_t(x)=f_{t+1}(x)-f_t(x)$, the performative feedback function comes out of
\begin{equation*}
\Delta_{t+1}(x) =\frac{\beta(x)}{1+\delta}\Delta_t(x)\,,
\qquad \beta(x) =(1+\delta)\frac{\Delta_{t+1}(x)}{\Delta_t(x)}
\end{equation*}
for every $x$ such that $\Delta_t(x)\neq 0$. 

\subsection{Time-varying heterogeneous effects}
\label{subsec:time-varying-heterogeneous-effects}

The distribution map may also change with time, because the environment drifts for reasons beyond the deployed predictor or because its response evolves across rounds. This possibility is central in stateful and partially performative models \citep{brown2022performative,lee2026partially}. Let~ $D_t(h)$ denote the joint distribution generated at round $t$ by deploying $h$. Suppose \begin{equation*}
\E_{D_t(f_t)}[Y\mid X=x]
=s_t(x)+\beta_t(x)f_t(x)\,,
\end{equation*}
where the feature marginal $D_{t,X}(f_t)$ may otherwise vary arbitrarily with $t$ and $f_t$, subject to common support with $D_0$. The same pointwise calculation gives
\begin{equation*}
f_{t+1}(x)
=\frac{s_t(x)+\beta_t(x)f_t(x)}{1+\delta}\,.
\end{equation*}

\begin{proposition}[Time-varying heterogeneous dynamics]
\label{prop:time-varying-heterogeneous-dynamics}
Assume $s_t\in\cH$ and $\beta_t\in L^\infty(D_0)$ for every $t$, and suppose
\begin{equation}
\label{eq:time-varying-uniform-contraction}
q:=\sup_{t\ge0}\frac{\|\beta_t\|_\infty}{1+\delta}<1\,.
\end{equation}
Then the effect of initialization decays geometrically: for any two initializations $f_0$ and $\widetilde f_0$, the resulting iterates satisfy
\begin{equation}
\label{eq:time-varying-forgetting}
\|f_t-\widetilde f_t\|
\le q^t\|f_0-\widetilde f_0\|\,.
\end{equation}
If $s_t\to s_\infty$ in norm and $\beta_t\to\beta_\infty$ in $L^\infty(D_0)$, then
\begin{equation}
\label{eq:time-varying-limit}
f_t\longrightarrow
\frac{s_\infty}{1+\delta-\beta_\infty}
\qquad\text{in norm}\,.
\end{equation}
\end{proposition}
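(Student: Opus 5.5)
The plan is to work directly with the pointwise recursion
\[
f_{t+1}(x)=\frac{s_t(x)+\beta_t(x)f_t(x)}{1+\delta},
\]
established just before the statement. It is affine in $f_t$, and its linear part is multiplication by the bounded function $\beta_t/(1+\delta)$.

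For \cref{eq:time-varying-forgetting}, I will subtract the recursions for the two initializations. The inhomogeneous term $s_t$ cancels, leaving
\[
f_{t+1}-\widetilde f_{t+1}=\frac{\beta_t}{1+\delta}\,(f_t-\widetilde f_t).
\]
A multiplication operator by $m\in L^\infty(D_0)$ has norm at most $\|m\|_\infty$ on $\cH=L^2(D_0)$. Hence
\[
\|f_{t+1}-\widetilde f_{t+1}\|\le q\,\|f_t-\widetilde f_t\|,
\]
and induction gives $q^t$. All iterates stay in $\cH$, since $s_t\in\cH$ and $\beta_t f_t\in\cH$.

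For \cref{eq:time-varying-limit}, let $f_\infty=s_\infty/(1+\delta-\beta_\infty)$.

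First I check that $f_\infty$ is well defined. Because $\beta_t\to\beta_\infty$ in $L^\infty$, we get $\|\beta_\infty\|_\infty\le q(1+\delta)$. So $|1+\delta-\beta_\infty|\ge(1-q)(1+\delta)>0$ almost everywhere, and $f_\infty\in\cH$ with $\|f_\infty\|\le\|s_\infty\|/((1-q)(1+\delta))$.

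Next, $f_\infty$ is the fixed point of the limiting map: $(1+\delta)f_\infty=s_\infty+\beta_\infty f_\infty$. Writing $e_t=f_t-f_\infty$ and subtracting,
\[
(1+\delta)e_{t+1}=\beta_t e_t+(s_t-s_\infty)+(\beta_t-\beta_\infty)f_\infty.
\]
This yields the perturbed contraction
\[
\|e_{t+1}\|\le q\|e_t\|+\varepsilon_t,\qquad
\varepsilon_t=\frac{\|s_t-s_\infty\|+\|\beta_t-\beta_\infty\|_\infty\|f_\infty\|}{1+\delta}\to0.
\]

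The main step is the standard lemma that $\|e_{t+1}\|\le q\|e_t\|+\varepsilon_t$ with $q<1$ and $\varepsilon_t\to0$ forces $e_t\to0$. Unrolling gives
\[
\|e_t\|\le q^t\|e_0\|+\sum_{k<t}q^{t-1-k}\varepsilon_k.
\]
To bound the sum, fix $\eta>0$ and choose $K$ with $\varepsilon_k\le\eta$ for $k\ge K$. The tail terms contribute at most $\eta/(1-q)$. The finitely many early terms each carry a factor $q^{t-1-k}$, which tends to $0$. Hence $\limsup_t\|e_t\|\le\eta/(1-q)$, and letting $\eta\to0$ finishes the proof.

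The only points needing care are the nondegeneracy of the denominator and the use of the $L^\infty$ convergence of $\beta_t$, which is what lets the term $(\beta_t-\beta_\infty)f_\infty$ be controlled in $L^2$.
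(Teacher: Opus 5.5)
Your proof is correct and follows the paper's route. For the forgetting bound you subtract the two recursions and bound the multiplication operator by $q$. For the limit you write $e_t=f_t-f_\infty$ as a perturbed contraction $e_{t+1}=\frac{\beta_t}{1+\delta}e_t+\varepsilon_t$ with $\|\varepsilon_t\|\to0$ and then unroll it. Your explicit check that $|1+\delta-\beta_\infty|\ge(1-q)(1+\delta)$, so that $f_\infty\in\cH$, supplies the detail behind the paper's brief remark that the denominator is bounded away from zero.
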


\begin{proof}
Set $a_t=\beta_t/(1+\delta)$ and $d_t=s_t/(1+\delta)$. Subtracting two copies of the update gives
\[
f_{t+1}-\widetilde f_{t+1}
=a_t(f_t-\widetilde f_t)\,,
\]
so \cref{eq:time-varying-forgetting} follows from $\|a_t\|_\infty\le q$.

For the convergence statement, let
\[
f_\infty=\frac{s_\infty}{1+\delta-\beta_\infty}\,.
\]
The denominator is bounded away from zero by \cref{eq:time-varying-uniform-contraction}. Writing $e_t=f_t-f_\infty$, we have
\[
e_{t+1}=a_te_t+\varepsilon_t,
\qquad
\varepsilon_t=d_t+a_tf_\infty-f_\infty\,.
\]
The assumptions imply $\|\varepsilon_t\|\to0$. For any $T<t$,
\[
\|e_t\|
\le q^{t-T}\|e_T\|
  +\sum_{k=T}^{t-1}q^{t-1-k}\|\varepsilon_k\|\,.
\]
Taking the limit superior and then sending $T\to\infty$ proves \cref{eq:time-varying-limit}.
\end{proof}

Uniform contraction therefore guarantees that retraining forgets its initialization even if the environment does not settle. If $(s_t,\beta_t)$ is periodic, the update over one period is a contraction and there is a unique attracting periodic trajectory. Under arbitrary variation, there need not be a single stable point.

Time variation also changes what can be identified from one trajectory. Let
\[
a_t=\frac{\beta_t}{1+\delta},
\qquad
d_t=\frac{s_t}{1+\delta},
\qquad
\Delta_t=f_{t+1}-f_t\,.
\]
Then
\[
\Delta_{t+1} =a_{t+1}\Delta_t +(a_{t+1}-a_t)f_t +(d_{t+1}-d_t)\,.
\]
Thus consecutive increment ratios no longer isolate $\beta_t$ without additional restrictions on how $s_t$ and $\beta_t$ evolve.

\section{Affine retraining maps}
\label{sec:affine-retraining}

The least-squares calculation above is an instance of a more general fact. What matters for population retraining is the map from the deployed predictor to the loss-optimal prediction.

Let $\cF\subseteq\cH$ be a closed linear prediction space. At round $t$, deploying $h$ induces a distribution $D_t(h)$. Define the time-dependent conditional risk
\[
r_{t,h}(a,x)=\E_{D_t(h)}[\ell(a,Y)\mid X=x],
\qquad a\in\R\,.
\]

\begin{proposition}[Time-dependent affine population retraining]
\label{prop:affine-population-retraining}
Suppose there are $s_t\in\cF$ and bounded linear operators $B_t:\cF\to\cF$ such that, for every round $t$, every deployed $h\in\cF$, and almost every $x$,
\begin{equation}
\label{eq:affine-pointwise-minimizer}
\argmin_{a\in\R} r_{t,h}(a,x)
=s_t(x)+(B_th)(x)\,.
\end{equation}
Assume $D_{t,X}(h)$ and $D_0$ have the same null sets. Then population retraining satisfies
\begin{equation}
\label{eq:affine-retraining-map}
T_t(h)=s_t+B_th\,.
\end{equation}
If
\[
\sup_{t\ge0}\|B_t\|_{\mathrm{op}}\le q<1\,,
\]
then the effect of initialization decays geometrically: for any two initializations $h_0$ and $\widetilde h_0$, the resulting iterates satisfy
\begin{equation}
\label{eq:affine-time-varying-forgetting}
  \|h_t-\widetilde h_t\|
\le q^t\|h_0-\widetilde h_0\|\,.
\end{equation}
If in addition $s_t\to s_\infty$ in norm and $B_t\to B_\infty$ in operator norm, then
\begin{equation}
\label{eq:affine-time-varying-limit}
   h_t\longrightarrow (I-B_\infty)^{-1}s_\infty\,.
\end{equation}

In the time-invariant case $s_t=s$ and $B_t=B$, if $I-B$ is invertible, the unique performatively stable point is
\begin{equation}
\label{eq:affine-stable-point}
h_{\mathrm{PS}}=(I-B)^{-1}s\,,
\end{equation}
and
\begin{equation}
\label{eq:affine-error-recurrence}
h_t-h_{\mathrm{PS}}=B^t(h_0-h_{\mathrm{PS}})\,.
\end{equation}
Thus $\|B\|_{\mathrm{op}}<1$ implies geometric convergence from every initialization; in finite dimensions, convergence from every initialization is equivalent to $\rho(B)<1$.
\end{proposition}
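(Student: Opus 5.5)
The proof follows the template of Claim~\ref{claim:ls-update-rule} and Proposition~\ref{prop:time-varying-heterogeneous-dynamics}, with multiplication by $\beta_t$ replaced by the operator $B_t$.

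\textbf{Step 1: the retraining map.} First, show that population retraining is given by \cref{eq:affine-retraining-map}. Write $w=dD_{t,X}(h)/dD_0$; by the common-null-set assumption, $w>0$ almost everywhere. The population risk of a candidate $f$ then equals
\[
\int w(x)\, r_{t,h}(f(x),x)\,dD_0(x).
\]
For every $f$, the integrand is bounded below pointwise by $w(x)\min_a r_{t,h}(a,x)$. By \cref{eq:affine-pointwise-minimizer}, this lower bound is attained by $g:=s_t+B_th$, which lies in $\cF$. Hence $g$ minimizes the risk over $\cF$.

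For the uniqueness implicit in writing $T_t(h)$, argue as follows. Any other minimizer $f$ must satisfy $r_{t,h}(f(x),x)=\min_a r_{t,h}(a,x)$ for almost every $x$, because $w>0$. The hypothesis states the argmin as a single value, so $f=g$ almost everywhere. Finiteness of the risk at $g$ is implicitly assumed, as in Claim~\ref{claim:ls-update-rule}.

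\textbf{Step 2: forgetting the initialization.} Iterate $h_{t+1}=s_t+B_th_t$. Subtracting the updates for two initializations gives $h_{t+1}-\widetilde h_{t+1}=B_t(h_t-\widetilde h_t)$. Taking norms and inducting with $\|B_t\|_{\mathrm{op}}\le q$ yields \cref{eq:affine-time-varying-forgetting}.

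\textbf{Step 3: the limit.} Since $\|B_\infty\|_{\mathrm{op}}\le q<1$, the Neumann series shows that $I-B_\infty$ is invertible. Set $h_\infty=(I-B_\infty)^{-1}s_\infty$ and $e_t=h_t-h_\infty$. Then
\[
e_{t+1}=B_te_t+\varepsilon_t,\qquad \varepsilon_t=s_t+B_th_\infty-h_\infty=(s_t-s_\infty)+(B_t-B_\infty)h_\infty,
\]
so $\|\varepsilon_t\|\to0$. Unrolling the recursion gives
\[
\|e_t\|\le q^{t-T}\|e_T\|+\sum_{k=T}^{t-1}q^{t-1-k}\|\varepsilon_k\|.
\]
The sum is at most $\sup_{k\ge T}\|\varepsilon_k\|/(1-q)$. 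Take $\limsup_{t}$ and then let $T\to\infty$; this proves \cref{eq:affine-time-varying-limit}.

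\textbf{Step 4: the time-invariant case.} A stable point is exactly a fixed point $h=s+Bh$, that is, $(I-B)h=s$. When $I-B$ is invertible, the unique solution is \cref{eq:affine-stable-point}. Subtracting the fixed-point equation from the update gives $e_{t+1}=Be_t$, hence \cref{eq:affine-error-recurrence}. If $\|B\|_{\mathrm{op}}<1$, this gives geometric convergence at rate $\|B\|_{\mathrm{op}}$.

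\textbf{Step 5: the finite-dimensional equivalence.} This is the only non-routine step. In finite dimensions, convergence from every initialization means $B^tv\to0$ for all $v$, since $h_0-h_{\mathrm{PS}}$ ranges over all of $\cF$. This is equivalent to $B^t\to0$, which by the standard Jordan-form or Gelfand formula argument is equivalent to $\rho(B)<1$.

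The point needing care is the converse direction when $\rho(B)\ge1$. If $1$ is an eigenvalue, then $I-B$ is not invertible. In that case I would argue directly from the affine recursion. Writing it in terms of a particular trajectory, differences of trajectories evolve as $B^t v$. Choose $v$ to be a real eigenvector, or the real part of a complex one, whose eigenvalue has modulus at least one. Then the two trajectories cannot both converge to the same limit, so convergence from every initialization fails.

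When $\rho(B)<1$, the matrix $I-B$ is automatically invertible, so the statement is consistent as written.
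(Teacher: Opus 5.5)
Your proposal is correct and follows the paper's proof step for step. You integrate the pointwise minimizer inequality against the positive density, subtract two trajectories to get forgetting, iterate the norm bound on the perturbed recursion $e_{t+1}=B_te_t+(s_t-s_\infty)+(B_t-B_\infty)h_\infty$, and specialize to the time-invariant case. Your explicit uniqueness argument via $w>0$ and your Step~5 fill in details the paper only asserts. In Step~5, reading ``convergence'' as convergence to a common limit is necessary: for $B=I$, $s=0$ every trajectory is constant, hence convergent, although $\rho(B)=1$.
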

\begin{proof}
By \cref{eq:affine-pointwise-minimizer},
\[
r_{t,h}(f(x),x)\ge r_{t,h}(s_t(x)+(B_th)(x),x)
\]
for every $f\in\cF$ and almost every $x$. Integrating under $D_{t,X}(h)$ shows that $s_t+B_th$ is the unique population minimizer.

Subtracting two copies of \cref{eq:affine-retraining-map} gives
\[
h_{t+1}-\widetilde h_{t+1}
=B_t(h_t-\widetilde h_t)\,,
\]
which proves \cref{eq:affine-time-varying-forgetting}. For the convergence statement, let
\[
h_\infty=(I-B_\infty)^{-1}s_\infty\,.
\]
This is well defined because $\|B_\infty\|_{\mathrm{op}}\le q<1$. With $e_t=h_t-h_\infty$,
\[
e_{t+1}
=B_te_t+(s_t-s_\infty)+(B_t-B_\infty)h_\infty\,.
\]
The last two terms tend to zero. Iterating the norm inequality and using $\|B_t\|_{\mathrm{op}}\le q$ proves \cref{eq:affine-time-varying-limit}. The time-homogeneous formulas follow by taking $s_t=s$ and $B_t=B$.
\end{proof}

\begin{remark}[Relation to heterogeneous effects]
Proposition~\ref{prop:time-varying-heterogeneous-dynamics} is the special case
\[
s_t\leftarrow \frac{s_t}{1+\delta},
\qquad
(B_th)(x)=\frac{\beta_t(x)}{1+\delta}h(x)\,.
\]
Here $B_t$ is a multiplication operator, so each feature value evolves independently. Proposition~\ref{prop:affine-population-retraining} is strictly more general because $B_t$ may couple predictions at different feature values or different coordinates.
\end{remark}

\subsection{Nonlinear performative effects}
\label{subsec:nonlinear-performative-effects}

An affine risk minimizer doesn't necessarily require linear performative effects. The next corollaries show that nonlinear performative responses can still lead to affine population dynamics for suitably chosen loss functions.

\begin{corollary}[Binary cross entropy with affine logits]
\label{cor:affine-logit-retraining}
Let $Y\in\{0,1\}$, let $\sigma(a)=(1+e^{-a})^{-1}$, and use logistic cross entropy
\[
\ell_{\log}(a,y)=\log(1+e^a)-ya\,.
\]
Suppose
\[
\Pr_{D(h)}\{Y=1\mid X=x\} =\sigma\bigl(s(x)+(Bh)(x)\bigr)\,.
\]
Then population retraining satisfies $T(h)=s+Bh$. Consequently, whenever $I-B$ is invertible,
\[
h_{\mathrm{PS}}=(I-B)^{-1}s, \qquad
h_t-h_{\mathrm{PS}}=B^t(h_0-h_{\mathrm{PS}})\,.
\]
\end{corollary}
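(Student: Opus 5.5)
The plan is to reduce the corollary to Proposition~\ref{prop:affine-population-retraining} in its time-invariant form, taking $s_t=s$ and $B_t=B$. All that needs checking is the pointwise minimizer condition \cref{eq:affine-pointwise-minimizer} for the logistic loss. Once it holds, the identity $T(h)=s+Bh$, the stable point $(I-B)^{-1}s$, and the error recurrence $B^t(h_0-h_{\mathrm{PS}})$ follow directly from \cref{eq:affine-retraining-map,eq:affine-stable-point,eq:affine-error-recurrence}. I would implicitly keep the standing assumptions of that proposition: $s\in\cF$, $B$ is a bounded operator on $\cF$, and $D_X(h)$ has common null sets with $D_0$.

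The key step is a pointwise calculation. Fix $h$ and $x$, and write $p(x)=\Pr_{D(h)}\{Y=1\mid X=x\}$. Because $\ell_{\log}$ is linear in $y$, the conditional risk is
\[
r_h(a,x)=\log(1+e^a)-p(x)\,a\,.
\]
This function is strictly convex in $a$, since its second derivative is $\sigma(a)(1-\sigma(a))>0$. Its derivative is $\sigma(a)-p(x)$, so the unique minimizer solves $\sigma(a)=p(x)$. By hypothesis $p(x)=\sigma(s(x)+(Bh)(x))$, and $\sigma$ is injective, so $a=s(x)+(Bh)(x)$. This is exactly \cref{eq:affine-pointwise-minimizer}.

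The main obstacle is existence of the minimizer. A minimizer exists only when $p(x)\in(0,1)$, because for $p\in\{0,1\}$ the infimum is approached only as $a\to\mp\infty$. Here this is automatic: $s(x)+(Bh)(x)$ is a finite real number almost everywhere, since elements of $L^2$ are finite a.e., so $p(x)$ lies strictly inside $(0,1)$. With that settled, the remaining claims follow by direct invocation of the proposition.
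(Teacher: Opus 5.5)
Your proposal is correct and matches the paper's proof: compute $\partial_a r_h(a,x)=\sigma(a)-p_h(x)$, use strict convexity to get the unique minimizer $\logit p_h(x)=s(x)+(Bh)(x)$, and invoke Proposition~\ref{prop:affine-population-retraining} in its time-invariant form. Your remark that $p_h(x)\in(0,1)$ almost everywhere, so the minimizer exists, is a detail the paper leaves implicit.
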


\begin{proof}
Writing $p_h(x)=\Pr_{D(h)}\{Y=1\mid X=x\}$, the derivative of the conditional risk is
\[
\partial_a r_h(a,x)=\sigma(a)-p_h(x)\,.
\]
The conditional risk is strictly convex, so its unique minimizer is
$\logit p_h(x)=s(x)+(Bh)(x)$. Apply Proposition~\ref{prop:affine-population-retraining}.
\end{proof}

Thus the observed response can be nonlinear even in the simplest scalar model. For example,
\[
\Pr_{D(h)}\{Y=1\mid X=x\}
=\sigma\bigl(\alpha f^*(x)+\beta h(x)\bigr)
\]
is a nonlinear sigmoid response, while repeated logistic retraining follows the affine recursion
$T(h)=\alpha f^*+\beta h$.

The same pointwise argument applies to a vector of class logits.

\begin{corollary}[Multiclass cross entropy with affine logits]
\label{cor:affine-softmax-retraining}
Let $Y\in\{1,\ldots,K\}$ and let
\[
\R_0^K=\{a\in\R^K:\mathbf 1^\top a=0\}\,.
\]
Let $\cF\subseteq L^2(D_0;\R_0^K)$ be a closed linear space of centered-logit functions, equipped with the inner product
\[
\langle f,g\rangle=\E_{D_0}[f(X)^\top g(X)]\,.
\]
Use multiclass cross entropy,
\[
\ell_{\mathrm{ce}}(a,y) =-a_y+\log\sum_{j=1}^K e^{a_j}\,.
\]
If
\begin{equation}
\label{eq:affine-softmax-response}
 \Pr_{D(h)}\{Y=\cdot\mid X=x\}
=\softmax\bigl(s(x)+(Bh)(x)\bigr)\,,
\end{equation}
then population retraining in centered-logit space satisfies
\[
T(h)=s+Bh\,.
\]
Hence the stable point and convergence formulas in
\cref{eq:affine-stable-point,eq:affine-error-recurrence} hold without change.
\end{corollary}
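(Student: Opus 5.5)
The plan is to mirror the proof of Corollary~\ref{cor:affine-logit-retraining}: compute the pointwise conditional-risk minimizer over centered logits, show it equals $s(x)+(Bh)(x)$, and then invoke Proposition~\ref{prop:affine-population-retraining}. That proposition was stated for scalar predictions in $\cH$. Its proof only uses pointwise minimization followed by integration against $D_X(h)$, which shares null sets with $D_0$. The same argument therefore goes through verbatim for $\R_0^K$-valued functions with the stated inner product, and I will note this rather than reprove it.

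\textbf{Conditional risk and its gradient.} Fix $x$ and write $p_h(x)=\Pr_{D(h)}\{Y=\cdot\mid X=x\}\in\R^K$. The conditional risk is
\[
r_h(a,x)=-a^\top p_h(x)+\log\sum_j e^{a_j}\,,
\]
with gradient $\nabla_a r_h(a,x)=\softmax(a)-p_h(x)$.

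\textbf{Restriction to $\R_0^K$.} The function $a\mapsto\log\sum_j e^{a_j}$ is invariant to shifts along $\mathbf 1$ up to an additive constant, and $-a^\top p_h$ shifts by the same constant because $\mathbf 1^\top p_h=1$. So $r_h$ is constant along $\mathbf 1$, and minimizing over $\R^K$ reduces to minimizing over $\R_0^K$. On $\R_0^K$ the Hessian $\operatorname{diag}(\softmax(a))-\softmax(a)\softmax(a)^\top$ is positive definite, since its kernel is spanned by $\mathbf 1$ whenever all softmax entries are positive. Hence $r_h$ is strictly convex on $\R_0^K$ and has at most one minimizer there.

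\textbf{Identifying the minimizer.} By \cref{eq:affine-softmax-response}, the vector $a^\star=s(x)+(Bh)(x)$ satisfies $\softmax(a^\star)=p_h(x)$, so the gradient vanishes at $a^\star$. Since $s\in\cF$ and $B$ maps $\cF$ to $\cF$, we have $a^\star\in\R_0^K$. Therefore $a^\star$ is the unique centered minimizer. This gives exactly the analogue of \cref{eq:affine-pointwise-minimizer} in centered-logit space.

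\textbf{Conclusion.} Integrating over $D_X(h)$ as in Proposition~\ref{prop:affine-population-retraining} shows that $T(h)=s+Bh$ is the unique population minimizer in $\cF$. The formulas \cref{eq:affine-stable-point,eq:affine-error-recurrence} then follow unchanged whenever $I-B$ is invertible on $\cF$.

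\textbf{Main obstacle.} The only delicate point is uniqueness. Cross entropy over raw logits has a one-dimensional family of minimizers, and the statement needs a single one. Restricting to $\R_0^K$ removes the $\mathbf 1$ direction. It also requires checking that the $\cF$-valued minimizer is the pointwise centered one, which holds because $a^\star$ is already centered by the assumptions on $s$ and $B$.
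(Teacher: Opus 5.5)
Your proposal is correct and takes essentially the paper's route: minimize the conditional cross entropy pointwise over centered logits, get uniqueness by removing the $\mathbf 1$ direction, and then apply Proposition~\ref{prop:affine-population-retraining}. The only differences are cosmetic. The paper identifies the minimizer through the decomposition $H(p)+\mathrm{KL}(p\,\|\,\softmax(a))$ and injectivity of $\softmax$ on $\R_0^K$, whereas you use the vanishing gradient $\softmax(a)-p$ and positive definiteness of the Hessian on $\R_0^K$; you are also more explicit than the paper that the proposition carries over verbatim to $\R_0^K$-valued predictors.
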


\begin{proof}
For a conditional class distribution $p$, cross entropy at score $a$ equals
\[
H(p)+\mathrm{KL}\bigl(p\,\|\,\softmax(a)\bigr)\,.
\]
It is minimized when $\softmax(a)=p$. Restricting to $\R_0^K$ removes the additive-logit ambiguity, so the minimizer under \cref{eq:affine-softmax-response} is uniquely $s(x)+(Bh)(x)$. Apply Proposition~\ref{prop:affine-population-retraining}.
\end{proof}

The same conclusion holds for some nonlinear responses.

\begin{corollary}[Nonlinear performative responses]
\label{cor:nonlinear-response-link}
Let $\psi:\R\to(0,1)$ be injective and define the composite cross-entropy loss
\[
\ell_\psi(a,y)
=-y\log \psi(a)-(1-y)\log(1-\psi(a))\,.
\]
Suppose $Y\in\{0,1\}$ and
\begin{equation}
\label{eq:nonlinear-link-response}
\Pr_{D(h)}\{Y=1\mid X=x\}
=\psi\bigl(s(x)+(Bh)(x)\bigr)\,.
\end{equation}
Then population retraining satisfies $T(h)=s+Bh$, with stable point and error dynamics given by
\cref{eq:affine-stable-point,eq:affine-error-recurrence}.
\end{corollary}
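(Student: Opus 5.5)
The plan is to reduce to \Cref{prop:affine-population-retraining} by showing that, for each $x$, the conditional risk $r_h(a,x)$ under the composite loss has the unique minimizer $a=s(x)+(Bh)(x)$. Once this is established, the proposition gives $T(h)=s+Bh$ directly, together with the stable point \cref{eq:affine-stable-point} and the error recurrence \cref{eq:affine-error-recurrence}. The argument follows the same pointwise pattern as \Cref{cor:affine-logit-retraining,cor:affine-softmax-retraining}, except that here we cannot differentiate, since $\psi$ is only assumed injective.

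Fix $x$ and write $p=p_h(x)=\psi\bigl(s(x)+(Bh)(x)\bigr)\in(0,1)$. The conditional risk is
\[
r_h(a,x)=-p\log\psi(a)-(1-p)\log\bigl(1-\psi(a)\bigr)
= H(p)+\mathrm{KL}\bigl(\mathrm{Ber}(p)\,\|\,\mathrm{Ber}(\psi(a))\bigr)\,,
\]
where $H(p)$ is the binary entropy, which does not depend on $a$. The KL divergence between Bernoulli distributions is nonnegative and vanishes if and only if $\psi(a)=p$, so $r_h(\cdot,x)$ attains its minimum exactly on the set $\{a:\psi(a)=p\}$.

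By assumption this set contains $a_0=s(x)+(Bh)(x)$, and injectivity of $\psi$ makes it the singleton $\{a_0\}$. This identifies the pointwise minimizer required in \cref{eq:affine-pointwise-minimizer} without using convexity or differentiability of $\psi$. Note that $p\in(0,1)$ because $\psi$ maps into $(0,1)$, so all logarithms are finite.

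The main obstacle is not the pointwise step but the integration step inside \Cref{prop:affine-population-retraining}. We need the population objective to be finite at the candidate $s+Bh$, and we need measurability of the pointwise minimizer so that it defines an element of $\cF$. The first holds because the risk at $a_0$ equals $H(p)\le\log 2$. The second holds because $s+Bh\in\cF$ by assumption. Any other $f$ is pointwise no better, and is strictly worse on a set of positive measure unless $f=s+Bh$ almost everywhere; common null sets with $D_0$ then transfer this to uniqueness in $\cH$. With these points checked, invoking \Cref{prop:affine-population-retraining} completes the proof.
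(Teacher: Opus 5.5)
Your proposal is correct and matches the paper's proof: you write the conditional risk as $H(p_h(x))+\mathrm{KL}\bigl(\operatorname{Ber}(p_h(x))\,\|\,\operatorname{Ber}(\psi(a))\bigr)$, use injectivity of $\psi$ to get the unique pointwise minimizer $s(x)+(Bh)(x)$, and invoke \Cref{prop:affine-population-retraining}. Your extra checks are details the paper leaves implicit in that proposition rather than a different route: finiteness of the risk at the candidate, membership $s+Bh\in\cF$, and almost-everywhere uniqueness.
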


\begin{proof}
For $p_h(x)=\psi(s(x)+(Bh)(x))$, the conditional risk is
\[
H(p_h(x)) +\mathrm{KL}\!\left(
  \operatorname{Ber}(p_h(x))
   \,\middle\|\,
   \operatorname{Ber}(\psi(a))
 \right)\,.
\]
It is minimized when $\psi(a)=p_h(x)$. Injectivity of $\psi$ gives the unique minimizer $a=s(x)+(Bh)(x)$, and the proposition applies.
\end{proof}

For example, consider
\[
\psi(a)=\sigma(a+\gamma\sin a), \qquad |\gamma|<1\,.
\]
This is an injective, nonlinear response link, so the probability in \cref{eq:nonlinear-link-response} is a nonlinear function of the deployed predictor. Nevertheless, retraining with the corresponding composite cross entropy follows the same affine recursion. 

\section{Normalization handles arbitrary positive feedback}
\label{sec:deployment-normalization}

Regularization stabilizes retraining by controlling the scale of the predictor. When only its direction matters at deployment (e.g., for ranking content), we can instead normalize after each retraining step. This preserves the direction of the update and, as we show below, can recover the stable signal even under arbitrarily strong positive feedback.

At round $t$, we deploy
\[
z_t=\frac{f_t}{\|f_t\|}, \qquad \|z_t\|=1\,.
\]
Assume the conditional response depends on this deployed predictor:
\[
   \E_{D(z_t)}[Y\mid X]=\alpha f^*(X)+\beta z_t(X)\,.
\]
The deployed feature distribution may also change with $z_t$. By the same pointwise calculation as in \Cref{sec:least-squares}, population squared-loss retraining gives
\[
f_{t+1}=\alpha f^*+\beta z_t\,,
\]
which is then normalized again before the next deployment:
\begin{equation}
\label{eq:normalized-update}
z_{t+1}=\frac{f_{t+1}}{\|f_{t+1}\|}
=\frac{\alpha f^*+\beta z_t}{\|\alpha f^*+\beta z_t\|}\,.
\end{equation}
The next theorem gives a global angle bound for every $\alpha>0$ and $\beta\ge0$.
\begin{theorem}[Normalization stabilizes arbitrarily strong positive feedback]
\label{thm:normalized-deployment}
Let $\alpha>0$, $\beta\ge0$, $\|f^*\|=1$, and $\|z_0\|=1$. Define the angle to the stable signal by
\[
\varphi_t=\arccos\langle z_t,f^*\rangle\in[0,\pi]\,.
\]
If $z_0\ne-f^*$, then the deployed predictors generated by \cref{eq:normalized-update} satisfy the global geometric bound
\[ 
\tan\!\left(\frac{\varphi_t}{2}\right)
  \le 
\left(\frac{\beta}{\alpha+\beta}\right)^t
\tan\!\left(\frac{\varphi_0}{2}\right)\,.
\]
In particular, $z_t\to f^*$ and $f_t\to (\alpha+\beta)f^*$. 
\end{theorem}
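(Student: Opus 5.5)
The plan is to reduce the theorem to a one-step contraction
\[
\tan\!\left(\frac{\varphi_{t+1}}{2}\right)\le \frac{\beta}{\alpha+\beta}\tan\!\left(\frac{\varphi_t}{2}\right),
\]
maintained together with the invariant $\varphi_t<\pi$, and then iterate it. Each step of \cref{eq:normalized-update} lives in the plane spanned by $f^*$ and $z_t$. So I would write $z_t=\cos\varphi_t\, f^*+\sin\varphi_t\, u_t$ with $u_t\perp f^*$ and $\|u_t\|=1$; if $\sin\varphi_t=0$, any choice of $u_t$ works. Then
\[
f_{t+1}=(\alpha+\beta\cos\varphi_t)f^*+\beta\sin\varphi_t\,u_t,
\]
and $\varphi_{t+1}$ is the angle of this planar vector from $f^*$. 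Its component along $u_t$ is nonnegative, so $\varphi_{t+1}\in[0,\pi]$ is determined by
\[
\cos\varphi_{t+1}=\frac{\alpha+\beta\cos\varphi_t}{N_t},\qquad \sin\varphi_{t+1}=\frac{\beta\sin\varphi_t}{N_t},\qquad N_t=\|\alpha f^*+\beta z_t\|.
\]
The trivial case $\beta=0$ gives $z_1=f^*$ immediately, so I assume $\beta>0$.

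First I check that the iteration is well defined. $N_t=0$ would require $\alpha f^*=-\beta z_t$, that is, $\alpha=\beta$ and $\varphi_t=\pi$. This is excluded as long as $\varphi_t<\pi$, which holds for $t=0$ by the assumption $z_0\ne -f^*$.

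Next I use the half-angle identity $\tan(\theta/2)=\sin\theta/(1+\cos\theta)$, which gives
\[
\tan\!\left(\frac{\varphi_{t+1}}{2}\right)=\frac{\beta\sin\varphi_t}{N_t+\alpha+\beta\cos\varphi_t}.
\]
The target bound is
\[
\frac{\beta}{\alpha+\beta}\tan\!\left(\frac{\varphi_t}{2}\right)=\frac{\beta\sin\varphi_t}{(\alpha+\beta)(1+\cos\varphi_t)}.
\]
Both numerators agree, so it suffices to compare denominators: $N_t+\alpha+\beta\cos\varphi_t\ge(\alpha+\beta)(1+\cos\varphi_t)$. This is equivalent to
\[
N_t\ge \beta+\alpha\cos\varphi_t.
\]
That inequality follows from Cauchy--Schwarz, because $N_t\ge\langle \alpha f^*+\beta z_t,z_t\rangle=\alpha\cos\varphi_t+\beta$.

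I expect the main obstacle to be finding this reformulation: choosing the half-angle tangent as the potential and recognizing that the needed inequality is exactly the projection bound. The remaining care is in the edge cases, since the left denominator is positive whenever $\varphi_t<\pi$. The contraction makes $\tan(\varphi_{t+1}/2)$ finite, so $\varphi_{t+1}<\pi$ and the induction continues.

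Iterating gives the stated geometric bound, hence $\varphi_t\to0$. Since $\|z_t-f^*\|^2=2-2\cos\varphi_t$, this gives $z_t\to f^*$. Finally, $f_{t+1}=\alpha f^*+\beta z_t\to(\alpha+\beta)f^*$.
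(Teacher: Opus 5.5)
Your proof is correct and follows the paper's argument. It uses the same planar decomposition, the same half-angle formula $\tan(\varphi_{t+1}/2)=\beta\sin\varphi_t/(N_t+\alpha+\beta\cos\varphi_t)$, and the same reduction to $N_t\ge\beta+\alpha\cos\varphi_t$. The differences are minor: you get that inequality from Cauchy--Schwarz against $z_t$, where the paper expands $r_t^2-(\beta+\alpha\cos\varphi_t)^2=\alpha^2\sin^2\varphi_t\ge0$ (the same Pythagorean fact), and you explicitly track the invariant $\varphi_t<\pi$ guaranteeing $N_t>0$, which the paper leaves implicit.
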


\begin{proof}
Let $u=f^*$ and write $z_t = \cos(\varphi_t)u+\sin(\varphi_t)v_t,$
where \(v_t\perp u\) and \(\|v_t\|=1\), and let
$r_t=\|\alpha u+\beta z_t\|.$ Then
\[
z_{t+1} = \frac{(\alpha+\beta\cos\varphi_t)u +\beta\sin\varphi_t\,v_t}{r_t}\,,
\]
so
\[
\cos\varphi_{t+1} = \frac{\alpha+\beta\cos\varphi_t}{r_t},
  \qquad
\sin\varphi_{t+1} = \frac{\beta\sin\varphi_t}{r_t}\,.
\]
Therefore,
\[
\tan\frac{\varphi_{t+1}}{2}
= \frac{\sin\varphi_{t+1}}{1+\cos\varphi_{t+1}}
 = \frac{\beta\sin\varphi_t}{r_t+\alpha+\beta\cos\varphi_t}\,.
\]
We now lower bound this denominator. By definition, 
\[
r_t^2=\alpha^2+\beta^2+2\alpha\beta\cos\varphi_t\,.
\]
Subtracting $(\beta+\alpha\cos\varphi_t)^2$ from both sides,
\begin{align*}
 r_t^2-(\beta+\alpha\cos\varphi_t)^2
&=
\alpha^2+\beta^2 +2\alpha\beta\cos\varphi_t
-(\beta+\alpha\cos\varphi_t)^2 \\
&= \alpha^2\sin^2\varphi_t\\
&\ge 0\,.
\end{align*}
Hence $r_t\ge |\beta+\alpha\cos\varphi_t|\ge \beta+\alpha\cos\varphi_t$, and thus
\[
r_t+\alpha+\beta\cos\varphi_t
\ge
(\alpha+\beta)(1+\cos\varphi_t)\,.
\]
It follows that
\[
\tan\frac{\varphi_{t+1}}{2}
\le
\frac{\beta}{\alpha+\beta}\cdot
\frac{\sin\varphi_t}{1+\cos\varphi_t}
=
\frac{\beta}{\alpha+\beta}
\tan\frac{\varphi_t}{2}\,.
\]
Iterating this inequality gives
\[
  \tan\frac{\varphi_t}{2}
  \le \left(\frac{\beta}{\alpha+\beta}\right)^t \tan\frac{\varphi_0}{2}\,.\qedhere
\]
\end{proof}

Regularization controls the scale of performative effects and then requires decoding. Normalization discards scale each round and contracts the half-angle directly at rate $\beta/(\alpha+\beta)$. Arbitrarily large positive feedback is therefore stable in direction, although convergence slows as $\beta/\alpha$ grows.

\subsection{Operator-valued and time-varying feedback}

The preceding theorem gives a global convergence result under constant performative feedback. We now allow the feedback to vary across directions and rounds. 

Let the stable signal $u\in\cH$ satisfy $\|u\|=1$, let $\alpha_t>0$, and consider
\[
z_{t+1} =\frac{\alpha_tu+B_tz_t}{\|\alpha_tu+B_tz_t\|},
\qquad \|z_t\|=1\,,
\]
where each $B_t:\cH\to\cH$ is bounded and linear. The feedback may vary across directions and rounds, while the assumptions below require $u$ to remain a common left and right eigenvector of $B_t$.

\begin{theorem}[Angular convergence with operator-valued and time-varying feedback]
\label{thm:operator-normalized-deployment}
Assume that, for every $t$,
\begin{equation}
\label{eq:operator-signal-direction}
B_tu=\beta_tu,
\qquad
B_t^*u=\beta_tu,
\qquad \beta_t\ge0\,,
\end{equation}
and define
\begin{equation*}
M_t=\sup_{\substack{v\perp u\\ \|v\|=1}}\|B_tv\|,
\qquad
q_t=\frac{M_t}{\alpha_t+\beta_t}\,.
\end{equation*}
If $\beta_0>0$, suppose
\begin{equation*}
\langle z_0,u\rangle> -\frac{\alpha_0}{\beta_0};
\end{equation*}
when $\beta_0=0$, no initialization condition is needed. Then $\langle z_1,u\rangle>0$, and for every $t\ge1$,
\begin{equation*}
\tan\varphi_{t+1}
\le q_t\tan\varphi_t\,,
\end{equation*}
where $\varphi_t$ is the angle between $z_t$ and $u$. Consequently,
\begin{equation}
\label{eq:operator-angle-product}
\tan\varphi_t
\le
\left(\prod_{k=1}^{t-1}q_k\right)\tan\varphi_1\,.
\end{equation}
If the product tends to zero, then $z_t\to u$. In particular, a uniform bound $q_t\le q<1$ gives geometric angular convergence. If $\langle z_0,u\rangle>0$, the bounds hold from $t=0$.
\end{theorem}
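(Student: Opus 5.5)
The plan is to decompose the unnormalized update $w_{t+1}=\alpha_tu+B_tz_t$ into its component along $u$ and its component orthogonal to $u$. The normalization does not change the angle, so $\tan\varphi_{t+1}$ is the ratio of these two components. First write $z_t=c_tu+v_t$ with $c_t=\langle z_t,u\rangle$ and $v_t\perp u$, so that $\|v_t\|=\sin\varphi_t$.

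The eigenvector assumptions \cref{eq:operator-signal-direction} give two facts.
\begin{itemize}
\item $B_tu=\beta_tu$ handles the $u$-component of $z_t$.
\item $B_t^*u=\beta_tu$ shows that $B_t$ maps $u^\perp$ into $u^\perp$. Indeed, for $v\perp u$ we have $\langle B_tv,u\rangle=\langle v,B_t^*u\rangle=\beta_t\langle v,u\rangle=0$.
\end{itemize}
Hence
\[
\langle w_{t+1},u\rangle=\alpha_t+\beta_tc_t,
\qquad
w_{t+1}-\langle w_{t+1},u\rangle u=B_tv_t.
\]
The orthogonal component satisfies $\|B_tv_t\|\le M_t\|v_t\|=M_t\sin\varphi_t$.

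Next comes the initialization step. The condition $\langle z_0,u\rangle>-\alpha_0/\beta_0$ when $\beta_0>0$ (or nothing, when $\beta_0=0$ and $\alpha_0>0$) gives exactly $\alpha_0+\beta_0c_0>0$. Normalization preserves the sign of the $u$-component, so $\langle z_1,u\rangle>0$. If $c_0>0$, this holds automatically.

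For the main step, take $t\ge1$ and assume inductively that $c_t>0$, so $\varphi_t\in[0,\pi/2)$. Then
\[
\tan\varphi_{t+1}=\frac{\|B_tv_t\|}{\alpha_t+\beta_tc_t}\le\frac{M_t\sin\varphi_t}{\alpha_t+\beta_t\cos\varphi_t}.
\]
It remains to compare this with $q_t\tan\varphi_t=M_t\sin\varphi_t/\bigl((\alpha_t+\beta_t)\cos\varphi_t\bigr)$. That reduces to the inequality $(\alpha_t+\beta_t)\cos\varphi_t\le\alpha_t+\beta_t\cos\varphi_t$, which is equivalent to $\alpha_t\cos\varphi_t\le\alpha_t$ and hence true. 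This is the main point where care is needed: the bound must be in terms of $\tan\varphi$ and not the half-angle, and the inequality uses $\cos\varphi_t\in(0,1]$. The denominator $\alpha_t+\beta_tc_t$ is positive because $\alpha_t>0$, $\beta_t\ge0$, and $c_t>0$, so $c_{t+1}>0$ as well and the induction continues.

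Iterating the one-step bound gives \cref{eq:operator-angle-product}. If the product of the $q_k$ tends to zero, then $\tan\varphi_t\to0$ with $c_t>0$, so $\varphi_t\to0$. Since $\|z_t-u\|^2=2-2\cos\varphi_t$, this gives $z_t\to u$. A uniform bound $q_t\le q<1$ makes the product at most $q^{t-1}$, which is geometric angular convergence. When $c_0>0$, the same one-step argument already applies at $t=0$, so the bounds hold from $t=0$.
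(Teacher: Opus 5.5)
Your proposal is correct and follows essentially the same route as the paper. Both arguments use the decomposition $z_t=c_tu+v_t$, use $B_t^*u=\beta_tu$ to keep $B_tv_t\perp u$, propagate positivity of $\alpha_t+\beta_tc_t$ by induction from the basin condition, and finish with the comparison $c_t(\alpha_t+\beta_t)\le\alpha_t+\beta_tc_t$, which holds because $c_t\le1$.
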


\begin{proof}
Write $z_t=c_tu+v_t,$ $c_t=\langle z_t,u\rangle,$ with $v_t\perp u\,.$
By \cref{eq:operator-signal-direction},
\[
\langle B_tv_t,u\rangle = \langle v_t,B_t^*u\rangle = \beta_t\langle v_t,u\rangle = 0,
\]
so $B_tv_t\perp u$. Therefore,
\[
\alpha_tu+B_tz_t = (\alpha_t+\beta_tc_t)u+B_tv_t\,.
\]
The initialization condition gives $\alpha_0+\beta_0c_0>0$, including when $\beta_0=0$, and hence $c_1>0$. If $c_t>0$, then $\alpha_t+\beta_tc_t>0$, so $c_{t+1}>0$. It follows that $c_t>0$ for every $t\ge1$.

For $t\ge1$, we have $c_t=\cos\varphi_t>0$ and
$\|v_t\| = \sin\varphi_t = c_t\tan\varphi_t\,.$
Therefore,
\begin{align*}
\tan\varphi_{t+1}
= \frac{\|B_tv_t\|}{\alpha_t+\beta_tc_t}
\le \frac{M_t\|v_t\|}{\alpha_t+\beta_tc_t}
= \frac{M_tc_t}{\alpha_t+\beta_tc_t}\tan\varphi_t
\le \frac{M_t}{\alpha_t+\beta_t}\tan\varphi_t
= q_t\tan\varphi_t\,,
\end{align*}
where we used $c_t\le1$. Iterating gives \cref{eq:operator-angle-product}.

If $\langle z_0,u\rangle>0$, the same argument applies from $t=0$.
\end{proof}

\begin{remark}[Random initialization]
If $z_0$ is drawn uniformly from the unit sphere in dimension $d$, then $\langle z_0,u\rangle$ concentrates near zero. Hence, for any fixed $\alpha_0/\beta_0>0$, the basin condition
\[
\langle z_0,u\rangle> -\frac{\alpha_0}{\beta_0}
\]
holds with probability $1-o(1)$ as $d\to\infty$.
\end{remark}

For a fixed operator, take $\alpha_t=\alpha$ and $B_t=B$. In the isotropic case $B_t=\beta_tI$, the same half-angle argument gives the contraction factor $\beta_t/(\alpha_t+\beta_t)$ for $\tan(\varphi_t/2)$. This bound holds globally and does not require the basin condition above.

\section{Data feedback loops in language modeling}
\label{sec:language-model-feedback}

We consider repeated language-model training when each new training set contains a source of high-quality real data, as well as synthetic data produced by earlier models. The high-quality source acts as a stable signal, while the synthetic data corresponds to the model's own influence on the training process. Such data feedback loops in language model training are an instance of performativity that is of independent interest. We'll see that the main message from earlier sections applies here, too: A relatively small amount of real data stabilizes training even in the presence of abundant synthetic data.

Let $C$ denote the context and $Y\in\{1,\ldots,V\}$ the next token. After round $t$, the model predicts a conditional distribution $P_t(\cdot\mid c)$. Let $Q_t(\cdot\mid c)$ denote the conditional distribution of the data used for the next retraining round. At the population level, cross entropy is minimized separately for each context, so retraining simply gives
\[
P_{t+1}=Q_t\,.
\]
The frequency of contexts may change across rounds without affecting this identity, provided every relevant context remains represented. 

\subsection{Time-varying mixture distributions}

Suppose the conditional next-token distribution at round $t+1$ is
\begin{equation}
\label{eq:next-token-mixture}
Q_t(\cdot\mid c)
=(1-\lambda_t)P^*(\cdot\mid c)
  +\lambda_tP_t(\cdot\mid c),
\qquad 0\le\lambda_t\le1\,,
\end{equation}
where $P^*$ is the stable conditional distribution. The quantity $1-\lambda_t$ is the fresh-data fraction at round $t$, while $\lambda_t$ is the synthetic fraction. This is the population version of the feedback loop studied in work on model collapse and model autophagy \citep{shumailov2024collapse,alemohammad2024mad}.

\begin{proposition}[Cross entropy under time-varying mixture feedback]
\label{prop:language-model-time-varying-feedback}
Assume population cross-entropy minimization at every round and the mixture response \cref{eq:next-token-mixture}. Then
\[
P_{t+1}=Q_t
\]
and, for every context $c$,
\begin{equation}
\label{eq:next-token-product}
P_t(\cdot\mid c)-P^*(\cdot\mid c)
=
\left(\prod_{s=0}^{t-1}\lambda_s\right)
\bigl(P_0(\cdot\mid c)-P^*(\cdot\mid c)\bigr)\,.
\end{equation}
Unless $P_0=P^*$ already, convergence to $P^*$ occurs if and only if
\[
\prod_{s=0}^{\infty}\lambda_s=0\,.
\]
Equivalently, this holds if some $\lambda_s=0$, or, when every $\lambda_s>0$, if and only if
\begin{equation}
\label{eq:cumulative-fresh-data}
\sum_{s=0}^{\infty}(1-\lambda_s)=\infty\,.
\end{equation}
In particular, a fixed fraction $\lambda_t\equiv\lambda<1$ gives the geometric rate $\lambda^t$.
\end{proposition}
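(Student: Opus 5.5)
The plan is to first establish the identity $P_{t+1}=Q_t$ and then reduce everything to a scalar product of the $\lambda_s$.

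\textbf{Step 1: the update $P_{t+1}=Q_t$.} For a fixed context $c$, the population cross-entropy objective restricted to that context is
\[
\E_{Y\sim Q_t(\cdot\mid c)}\bigl[-\log P(Y\mid c)\bigr]=H\bigl(Q_t(\cdot\mid c)\bigr)+\mathrm{KL}\bigl(Q_t(\cdot\mid c)\,\|\,P(\cdot\mid c)\bigr).
\]
This is the same decomposition used in the proof of Corollary~\ref{cor:affine-softmax-retraining}. It is uniquely minimized at $P(\cdot\mid c)=Q_t(\cdot\mid c)$. The context marginal only reweights these per-context objectives with positive weights, exactly as in Claim~\ref{claim:ls-update-rule}, so the pointwise minimizer is the population minimizer.

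\textbf{Step 2: the error recursion.} Substitute the mixture \cref{eq:next-token-mixture} into $P_{t+1}=Q_t$ and subtract $P^*$:
\[
P_{t+1}-P^*=\lambda_t\,(P_t-P^*).
\]
An induction on $t$ then gives \cref{eq:next-token-product}. This is the scalar, time-varying case of Proposition~\ref{prop:affine-population-retraining}, with $s_t=(1-\lambda_t)P^*$ and $B_t=\lambda_t I$, although the direct induction is simpler. The fixed-fraction case $\lambda_t\equiv\lambda$ gives the factor $\lambda^t$ immediately.

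\textbf{Step 3: the convergence criterion.} Fix any context with $P_0(\cdot\mid c)\ne P^*(\cdot\mid c)$. The error at that context is a fixed nonzero vector scaled by $\Pi_t=\prod_{s<t}\lambda_s$. Since each $\lambda_s\in[0,1]$, the sequence $\Pi_t$ is nonincreasing and bounded below by $0$, so it has a limit $\Pi_\infty$. Convergence to $P^*$ therefore holds if and only if $\Pi_\infty=0$.

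\textbf{Step 4: equivalence with \cref{eq:cumulative-fresh-data}.} This is the only nontrivial step, and it is the classical infinite-product criterion.
\begin{itemize}
\item If some $\lambda_s=0$, then $\Pi_t=0$ for all later $t$, so the product vanishes.
\item If every $\lambda_s>0$, set $\epsilon_s=1-\lambda_s\in[0,1)$. The inequality $\log(1-\epsilon)\le-\epsilon$ gives $\Pi_t\le\exp(-\sum_{s<t}\epsilon_s)$. Hence a divergent sum forces $\Pi_t\to0$.
\item Conversely, suppose $\sum_s\epsilon_s<\infty$. Then $\epsilon_s\to0$, so eventually $\epsilon_s\le1/2$, and there $\log(1-\epsilon_s)\ge-2\epsilon_s$. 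The finitely many earlier factors are strictly positive. Together these give $\Pi_\infty\ge c\exp(-2\sum_s\epsilon_s)>0$ for some constant $c>0$.
\end{itemize}

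The main care point is this converse direction. Strict positivity of every factor is what rules out the zero case, and the lower bound on the logarithm only holds once the $\epsilon_s$ are small.
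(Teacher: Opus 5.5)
Your proposal is correct and follows the paper's proof essentially step for step. Pointwise cross-entropy minimization gives $P_{t+1}=Q_t$, and subtracting $P^*$ yields $P_{t+1}-P^*=\lambda_t(P_t-P^*)$. The infinite-product criterion is then handled with the same two logarithmic bounds: $-\log\lambda\ge 1-\lambda$ for sufficiency, and $-\log\lambda_s\le 2(1-\lambda_s)$ once $\lambda_s$ is close to $1$ for the converse. Your extra details, the $H+\mathrm{KL}$ decomposition and the monotone limit of the partial products, only make explicit what the paper leaves implicit.
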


\begin{proof}
Cross entropy is minimized pointwise by the conditional distribution of the training data, which is $Q_t$; changing the context marginal does not affect this population minimizer under common support. Subtracting $P^*$ from \cref{eq:next-token-mixture} gives
\[
P_{t+1}-P^*=\lambda_t(P_t-P^*)\,,
\]
and iteration gives \cref{eq:next-token-product}. The product vanishes immediately if some $\lambda_s=0$. Otherwise,
\[
\prod_s\lambda_s=0
\quad\Longleftrightarrow\quad
\sum_s -\log\lambda_s=\infty\,.
\]
Since $-\log\lambda\ge1-\lambda$, divergence of \cref{eq:cumulative-fresh-data} is sufficient. Conversely, if $\sum_s(1-\lambda_s)<\infty$, then $\lambda_s\to1$ and, eventually, $-\log\lambda_s\le2(1-\lambda_s)$, so the logarithmic series converges and the product is positive.
\end{proof}

A fixed fresh-data fraction is sufficient but not necessary: the fraction may shrink so long as its sum diverges. With pure self-consumption, $\lambda_t=1$, the population model does not move. 

\subsection{Accumulating data pools}

The mixture model above combines fresh data with samples of the latest model only. Practical pipelines rarely discard data: each round adds newly generated and newly collected data to a pool that retains everything from earlier rounds \citep{gerstgrasser2024model}. 

The pool contains data from all previous models. But after optimal retraining, the current model already matches the conditional distribution of the full pool, so the next update depends only on the current model and the new data. If real and synthetic data use the same context distribution, their proportions in the pooled dataset are the same at every context.

\begin{proposition}[Accumulated synthetic data]
\label{prop:accumulation}
Assume the initial pool and every real and synthetic batch use the same distribution over contexts.
Let the initial pool consist of $n_0>0$ units of data and let $P_0$ denote its conditional distribution, the model obtained by population cross-entropy minimization on this pool. At round $t=0,1,2,\ldots$, add $n_t^{\mathrm{gen}}\ge0$ units sampled from the deployed model $P_t$ and $n_t^{\mathrm{real}}\ge0$ fresh units with conditional distribution $P^*$, and let $P_{t+1}$ be the population cross-entropy minimizer on the enlarged pool of size $N_{t+1}=n_0+\sum_{s\le t}(n_s^{\mathrm{real}}+n_s^{\mathrm{gen}})$. Then, for every $t\ge0$,
\begin{equation}
\label{eq:accumulation-one-step}
P_{t+1}-P^*
=\Bigl(1-\frac{n_t^{\mathrm{real}}}{N_{t+1}}\Bigr)\bigl(P_t-P^*\bigr)\,,
\end{equation}
and therefore
\begin{equation}
\label{eq:accumulation-product}
P_t-P^*
=\prod_{u=0}^{t-1}\Bigl(1-\frac{n_u^{\mathrm{real}}}{N_{u+1}}\Bigr)
\bigl(P_0-P^*\bigr)\,.
\end{equation}
Unless $P_0=P^*$ already, convergence to $P^*$ occurs if and only if
\begin{equation}
\label{eq:accumulation-criterion}
\sum_{t=0}^{\infty}\frac{n_t^{\mathrm{real}}}{N_{t+1}}=\infty\,.
\end{equation}
\end{proposition}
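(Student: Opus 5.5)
We prove \cref{eq:accumulation-one-step} by writing the conditional distribution of the enlarged pool as a mixture, and then reduce the convergence criterion to the product argument already used in Proposition~\ref{prop:language-model-time-varying-feedback}.

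\textbf{One-step identity.} Since all batches share the same context distribution, the conditional next-token distribution of the pool at every context is the size-weighted average of the batch conditionals. Population cross entropy is minimized pointwise by this conditional, exactly as in the proof of Proposition~\ref{prop:language-model-time-varying-feedback}. We argue by induction that $P_t$ is the conditional distribution of the pool of size $N_t$, with $N_0=n_0$. The base case is the definition of $P_0$. For the inductive step, the enlarged pool consists of three parts: the old pool, of size $N_t$ and conditional $P_t$; the generated batch, of size $n_t^{\mathrm{gen}}$ and conditional $P_t$; and the fresh batch, of size $n_t^{\mathrm{real}}$ and conditional $P^*$. Therefore
\[
P_{t+1}=\frac{(N_t+n_t^{\mathrm{gen}})P_t+n_t^{\mathrm{real}}P^*}{N_{t+1}}\,.
\]
Using $N_t+n_t^{\mathrm{gen}}=N_{t+1}-n_t^{\mathrm{real}}$ and subtracting $P^*$ gives \cref{eq:accumulation-one-step}. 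Iterating this identity gives \cref{eq:accumulation-product}.

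\textbf{Convergence criterion.} Set $\epsilon_t=n_t^{\mathrm{real}}/N_{t+1}\in[0,1)$. Strict inequality holds because $N_{t+1}\ge n_0+n_t^{\mathrm{real}}>n_t^{\mathrm{real}}$. Every factor $1-\epsilon_t$ is therefore strictly positive, so the case of a zero factor that appeared in Proposition~\ref{prop:language-model-time-varying-feedback} cannot occur here.

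Convergence holds, unless $P_0=P^*$, if and only if $\prod_t(1-\epsilon_t)=0$, which is equivalent to $\sum_t -\log(1-\epsilon_t)=\infty$. For sufficiency, the inequality $-\log(1-\epsilon)\ge\epsilon$ shows that divergence of \cref{eq:accumulation-criterion} forces the product to zero. For necessity, suppose $\sum_t\epsilon_t<\infty$. Then $\epsilon_t\to0$, so eventually $\epsilon_t\le 1/2$, and on that range $-\log(1-\epsilon_t)\le 2\epsilon_t$. The tail of the logarithmic series then converges. The finitely many earlier factors are strictly positive, so the whole product is positive and $P_t$ stays bounded away from $P^*$.

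\textbf{Main obstacle.} The only delicate step is justifying the mixture formula for the pool conditional. It relies on the equal context-distribution assumption: otherwise the weights would depend on the context $c$, and the scalar coefficient in \cref{eq:accumulation-one-step} would become context dependent. Under the stated assumption the weights are the same constants at every context, so pointwise minimization yields the identity directly. We also need every context to remain represented, which the common context distribution guarantees.
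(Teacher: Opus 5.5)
Your proof is correct and follows essentially the same route as the paper. Both arguments identify each $P_t$ with the conditional distribution of its pool, write $N_{t+1}P_{t+1}=(N_t+n_t^{\mathrm{gen}})P_t+n_t^{\mathrm{real}}P^*$, and settle the criterion using $-\log(1-x)\ge x$ together with $-\log(1-x)\le 2x$ for small $x$, with $n_t^{\mathrm{real}}<N_{t+1}$ (from $n_0>0$) keeping every factor strictly positive.
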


\begin{proof}
Population cross-entropy minimization returns the conditional distribution of the training data, so every model equals the conditional distribution of its pool: $P_0$ by definition, and each $P_{t+1}$ by construction. Since the pool of size $N_t$ on which $P_t$ was trained has conditional distribution $P_t$, adding $n_t^{\mathrm{gen}}$ units with distribution $P_t$ leaves the conditional distribution unchanged, and adding $n_t^{\mathrm{real}}$ fresh units gives
\[
N_{t+1}P_{t+1}=(N_t+n_t^{\mathrm{gen}})P_t+n_t^{\mathrm{real}}P^*\,.
\]
Dividing by $N_{t+1}=N_t+n_t^{\mathrm{gen}}+n_t^{\mathrm{real}}$ and subtracting $P^*$ yields \cref{eq:accumulation-one-step}; iteration yields \cref{eq:accumulation-product}. The factors lie in $(0,1]$ because $0\le n_u^{\mathrm{real}}<N_{u+1}$. If the sum in \cref{eq:accumulation-criterion} diverges, then $\sum_u-\log(1-n_u^{\mathrm{real}}/N_{u+1})=\infty$ and the product vanishes. Conversely, if it converges, then $n_u^{\mathrm{real}}/N_{u+1}\to0$, eventually $-\log(1-n_u^{\mathrm{real}}/N_{u+1})\le2n_u^{\mathrm{real}}/N_{u+1}$, and the product is positive.
\end{proof}

\begin{remark}[Constant additions]
Suppose $n_u^{\mathrm{real}}=r>0$ and $n_u^{\mathrm{gen}}=g\ge0$ are constant. Then
\[
\frac{n_u^{\mathrm{real}}}{N_{u+1}}
=\frac{r}{n_0+(u+1)(r+g)}\,.
\]
Using $\log(1-x)=-x+O(x^2)$ in \cref{eq:accumulation-product} gives
\[
\log\prod_{u=0}^{t-1}
\left(1-\frac{r}{n_0+(u+1)(r+g)}\right)
=-\frac{r}{r+g}\log t+O(1)\,.
\]
Thus the product is $\Theta(t^{-r/(r+g)})$. Retaining all previously generated data therefore leads to polynomial rather than geometric convergence, with exponent equal to the fresh share of the per-round additions.
\end{remark}

\section{Parametric models and stochastic optimization}
\label{sec:parametric-models}

In the standard formulation of performative prediction~\citep{perdomo2020performative}, the distribution map~$D(\theta)$ depends on a $d$-dimensional parameter vector $\theta\in\mathbb{R}^d$ that specifies a parametric model such as $f_\theta(x)=\langle \theta,x\rangle.$ We'll now show that the stable signal principle extends to this setting. Throughout we assume that the distribution $D(\theta)$ has covariance
\[
\Sigma(\theta)=\E_{D(\theta)}[XX^\top]\,.
\]

\subsection{Exact population-level retraining}
\label{subsec:parametric-population}

The affine parameter update is not specific to squared loss. If the conditional risk is uniquely minimized at $\langle a^*+B\theta,x\rangle$ and $\Sigma(\theta)$ is positive definite, then population retraining gives
\[
T(\theta)=a^*+B\theta\,.
\]

For squared loss, this conditional-minimizer condition is equivalent to the conditional-mean model
\[
\E_{D(\theta)}[Y\mid X=x]=\langle a^*+B\theta,x\rangle\,.
\]
For logistic loss, it is enough that the conditional log-odds equal $\langle a^*+B\theta,x\rangle$.

Squared loss with $\ell_2$-regularization as in \Cref{sec:least-squares} gives 
\[
T_\delta(\theta)=\frac{a^*+B\theta}{1+\delta}\,.
\]
Indeed, the normal equations are
\[
(1+\delta)\Sigma(\theta)T_\delta(\theta) =\Sigma(\theta)(a^*+B\theta)\,,
\]
so the changing covariance cancels. If $(1+\delta)I-B$ is invertible, the regularized fixed point is
\begin{equation*}
\bar\theta_\delta=((1+\delta)I-B)^{-1}a^*\,,
\end{equation*}
and
\begin{equation*}
\theta_t-\bar\theta_\delta =\left(\frac{B}{1+\delta}\right)^t (\theta_0-\bar\theta_\delta)\,.
\end{equation*}
Thus retraining converges from every initialization if and only if $\rho(B)<1+\delta$. For the original unregularized loss,
\[
\theta_{\mathrm{PS}}=(I-B)^{-1}a^*\,,
\]
whenever $I-B$ is invertible, and
\[
 \theta_{\mathrm{PS}} = (I-B)^{-1}((1+\delta)I-B)\bar\theta_\delta\,.
\]
The increment in each round satisfies
\[
\theta_{t+2}-\theta_{t+1} =\frac{B}{1+\delta}(\theta_{t+1}-\theta_t)\,.
\]
A collection of linearly independent observed increments identifies $B$ and a subsequent transition identifies $a^*$.

\subsection{Stochastic gradient retraining for squared loss}

We now consider stochastic gradient retraining with one sample per deployment. We focus on squared loss. The population result only needs the pointwise risk minimizer to be well specified. A single stochastic gradient step also depends on the feature covariance because it does not solve the normal equations. We state the result for the scalar feedback model
\begin{equation}
\label{eq:parametric-scalar-conditional-mean}
\E_{D(\theta)}[Y\mid X=x] =\langle\alpha\theta^*+\beta\theta,x\rangle\,.
\end{equation}
At round $t$, draw $(X_t,Y_t)\sim D(\theta_t)$ and take one step
\begin{equation*}
\theta_{t+1} =\theta_t -\eta_t\Bigl((1+\delta)\langle\theta_t,X_t\rangle-Y_t\Bigr)X_t\,.
\end{equation*}
We assume that each update deploys the new model, corresponding to the ``Greedy Deploy'' method of \citet{mendler2020stochastic} in performative prediction.

Write
\[
\xi_{t+1} =Y_t-\langle\alpha\theta^*+\beta\theta_t,X_t\rangle\,.
\]
By \cref{eq:parametric-scalar-conditional-mean}, $\E[\xi_{t+1}\mid X_t,\theta_t]=0$.

Assume constants $\kappa,R,\nu>0$ such that, for every deployed parameter,
\begin{equation}
\label{eq:parametric-sgd-coverage}
\Sigma(\theta)\succeq\kappa I, \qquad
\|X\|_2\le R\quad\text{almost surely under }D(\theta)\,,
\end{equation}
and
\begin{equation}
\label{eq:parametric-sgd-noise}
\E[\xi_{t+1}^2\mid X_t,\theta_t]\le\nu^2\,.
\end{equation}

The next result follows from the standard second-moment analysis of stochastic gradient descent \citep[Section~4]{bottou2018optimization}.

\begin{proposition}[Stochastic recovery in the parametric model]
\label{prop:parametric-sgd}
Let $m=1+\delta-\beta$ and $\bar\theta=\frac{\alpha}{m}\theta^*$ and assume $m>0$. Under \cref{eq:parametric-scalar-conditional-mean,eq:parametric-sgd-coverage,eq:parametric-sgd-noise}, let $e_t=\theta_t-\bar\theta$. For every $t$ such that $\eta_t\le 2/(mR^2)$,
\begin{equation}
\label{eq:parametric-sgd-second-moment}
\E[\|e_{t+1}\|_2^2\mid\theta_t]
\le
\left[1-\kappa\bigl(2m\eta_t-m^2R^2\eta_t^2\bigr)\right]
\|e_t\|_2^2
+\eta_t^2\nu^2R^2\,.
\end{equation}
If $\sum_t\eta_t=\infty$, $\sum_t\eta_t^2<\infty$, and $\eta_t\le1/(mR^2)$ eventually, then
\[
\E[\|\theta_t-\bar\theta\|_2^2]\longrightarrow0\,.
\]
If $\beta\ne1$, the decoded iterates
\[
\widetilde\theta_t = \frac{1+\delta-\beta}{1-\beta}\theta_t
\]
converge in mean square to
\[
\theta_{\mathrm{PS}}=\frac{\alpha}{1-\beta}\theta^*\,.
\]
For a constant stepsize $0<\eta<2/(mR^2)$,
\begin{equation*}
\limsup_{t\to\infty}
\E[\|\theta_t-\bar\theta\|_2^2]
\le \frac{\eta\nu^2R^2}{\kappa(2m-m^2R^2\eta)}\,.
\end{equation*}
\end{proposition}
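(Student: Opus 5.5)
The plan is to rewrite one SGD step as an exact affine recursion in $e_t$, then carry out the usual second-moment expansion. Substituting $Y_t=\langle\alpha\theta^*+\beta\theta_t,X_t\rangle+\xi_{t+1}$ into the update gives
\[
(1+\delta)\langle\theta_t,X_t\rangle-Y_t=m\langle\theta_t,X_t\rangle-\alpha\langle\theta^*,X_t\rangle-\xi_{t+1}
=m\langle e_t,X_t\rangle-\xi_{t+1}\,.
\]
The last equality uses $\alpha\theta^*=m\bar\theta$, which holds by definition of $\bar\theta$. Hence
\[
e_{t+1}=\bigl(I-m\eta_tX_tX_t^\top\bigr)e_t+\eta_t\xi_{t+1}X_t\,.
\]
This is just SGD on a strongly convex quadratic with effective curvature $m\Sigma(\theta_t)$, so the argument of \citet[Section~4]{bottou2018optimization} applies almost verbatim.

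Next I expand $\|e_{t+1}\|_2^2$ and condition first on $(X_t,\theta_t)$. The cross term involving $\xi_{t+1}$ vanishes because $\E[\xi_{t+1}\mid X_t,\theta_t]=0$. The noise term is bounded by $\eta_t^2\nu^2R^2$ using \cref{eq:parametric-sgd-noise} and $\|X_t\|_2\le R$. For the deterministic part,
\[
\|(I-m\eta_tXX^\top)e\|^2=\|e\|^2-(2m\eta_t-m^2\eta_t^2\|X\|^2)\langle e,X\rangle^2
\le\|e\|^2-(2m\eta_t-m^2\eta_t^2R^2)\langle e,X\rangle^2\,.
\]
The coefficient $2m\eta_t-m^2\eta_t^2R^2$ is nonnegative precisely when $\eta_t\le 2/(mR^2)$; this is the step that needs the stepsize condition, and it is the one point to check carefully. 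Taking the expectation over $X_t\sim D_X(\theta_t)$ gives $\E\langle e_t,X_t\rangle^2=e_t^\top\Sigma(\theta_t)e_t\ge\kappa\|e_t\|^2$. Because the coefficient is nonnegative, this lower bound can be substituted in, which yields \cref{eq:parametric-sgd-second-moment}. The fact that the distribution depends on $\theta_t$ causes no trouble, since \cref{eq:parametric-sgd-coverage} holds uniformly.

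Taking total expectations gives the scalar recursion $a_{t+1}\le(1-c_t)a_t+\eta_t^2\nu^2R^2$, where $c_t=\kappa m\eta_t(2-mR^2\eta_t)$. Once $\eta_t\le1/(mR^2)$, we have $c_t\ge\kappa m\eta_t$. Also $c_t\le1$, because $\kappa\le R^2$ (from $\kappa\le\operatorname{tr}\Sigma/d\le R^2$) and $x(2-x)\le1$. So $\sum c_t=\infty$ while $\sum\eta_t^2<\infty$, and the standard Robbins--Monro lemma gives $a_t\to0$. To prove that lemma, unroll the recursion to
\[
a_t\le\prod_{k=T}^{t-1}(1-c_k)\,a_T+\sum_{k\ge T}\eta_k^2\nu^2R^2\,,
\]
let $t\to\infty$ so the product vanishes, and then let $T\to\infty$.

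For the decoded iterates, $\widetilde\theta_t-\theta_{\mathrm{PS}}=\frac{m}{1-\beta}(\theta_t-\bar\theta)$, because $\frac{m}{1-\beta}\bar\theta=\frac{\alpha}{1-\beta}\theta^*$. Mean-square convergence therefore transfers directly. For a constant stepsize, the recursion has the form $a_{t+1}\le(1-c)a_t+\eta^2\nu^2R^2$ with $c=\kappa\eta(2m-m^2R^2\eta)\in(0,1]$. Its limsup is at most $\eta^2\nu^2R^2/c$, which is exactly the stated bound.
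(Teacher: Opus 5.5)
Your proposal is correct and follows the paper's proof. It uses the same rewriting $e_{t+1}=e_t-\eta_t\bigl(m\langle e_t,X_t\rangle-\xi_{t+1}\bigr)X_t$, the same conditional second-moment expansion bounded via $\|X_t\|_2\le R$ and $\Sigma(\theta_t)\succeq\kappa I$, and then iteration of the scalar recursion. Where the paper just says to iterate, you supply the missing details: the check $0\le c_t\le 1$ (via $\kappa\le R^2$), which is needed to unroll the recursion, and the explicit tail-sum argument giving both the vanishing and the constant-stepsize limits.
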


\begin{proof}
Using $m\bar\theta=\alpha\theta^*$, the update becomes
\[
e_{t+1} =e_t-\eta_t \bigl(m\langle e_t,X_t\rangle-\xi_{t+1}\bigr)X_t\,.
\]
Conditioning on $\theta_t$ and using $\E[\xi_{t+1}\mid X_t,\theta_t]=0$ gives
\begin{align*}
\E[\|e_{t+1}\|_2^2\mid\theta_t]
&= \|e_t\|_2^2
  - 2m\eta_t e_t^\top\Sigma(\theta_t)e_t\\
& \quad
  + m^2\eta_t^2
   \E[\|X_t\|_2^2\langle e_t,X_t\rangle^2\mid\theta_t]
  + \eta_t^2\E[\xi_{t+1}^2\|X_t\|_2^2\mid\theta_t]\,.
\end{align*}
The assumptions imply
\[
\E[\|X_t\|_2^2\langle e_t,X_t\rangle^2\mid\theta_t]
\le R^2 e_t^\top\Sigma(\theta_t)e_t
\]
and
\[
\E[\xi_{t+1}^2\|X_t\|_2^2\mid\theta_t]\le\nu^2R^2\,.
\]
When $\eta_t\le 2/(mR^2)$, the coefficient $2m\eta_t-m^2R^2\eta_t^2$ is nonnegative. Substitution and $\Sigma(\theta_t)\succeq\kappa I$ then give \cref{eq:parametric-sgd-second-moment}. The decreasing- and constant-stepsize conclusions follow by iterating this recursion.
\end{proof}
The conditional mean update is
\begin{equation*}
\E[\theta_{t+1}-\theta_t\mid\theta_t]
=-\eta_t m\Sigma(\theta_t)(\theta_t-\bar\theta)\,.
\end{equation*}
Thus population retraining needs only uniqueness of the minimizing parameter, while this SGD argument requires a uniform lower eigenvalue bound.

\section{Conclusion}

The stable signal perspective is an organizing principle for understanding repeated model training and deployment. It predicts that retraining seeks stable signals in the data, even if the initial influence of the stable signal is weak compared to the model’s own influence. If a click on a platform is 90\% the model’s display ranking, and 10\% item quality, repeated optimization will---perhaps counterintuitively---settle on item quality eventually. Previous theories of retraining, in contrast, apply primarily to small performative effects, casting model influence as an error term rather than a first-class citizen.

With suitable regularization or normalization, stable signals can overcome arbitrarily strong performative effects. This result holds with some generality: under arbitrary feature changes, time-varying and heterogeneous performative effects, and operator-valued extensions. What retraining converges to, however, isn’t the stable signal itself, but rather the direction of the stable signal. This suggests that the natural stabilizing behavior of retraining is directional, not absolute. What survives aren’t absolute numbers, but rather the ranking of alternatives according to the stable signal.

\subsection*{Acknowledgments}

This is a companion article to an invited contribution to the Proceedings of the International Congress of Mathematicians (ICM), 2026. Many thanks to the organizers for this opportunity. I'd like to thank Jiduan Wu for her contributions to a closely related ongoing project. Thanks to Juan Carlos Perdomo, Celestine Mendler-Dünner, and Lorenzo Rosasco for helpful feedback.

\bibliographystyle{plainnat}
\bibliography{references}

\end{document}